\mathchardef\mhyphen="2D 
\newcommand{\rankelim}{{\tt Rank1Elim}}
\newcommand{\rankelimKL}{{\tt Rank1ElimKL}}
\newcommand{\osub}{{\tt OSUB}}
\newcommand{\uts}{{\tt UTS}}
\newcommand{\ossb}{{\tt OSSB}}
\newcommand{\ucb}{{\tt UCB}}
\newcommand{\klucb}{{\tt KL\mhyphen UCB}}
\begin{document}

\title{Solving Bernoulli Rank-One Bandits with \\ Unimodal Thompson Sampling}

\author{\name Cindy Trinh \email cindy.trinh.sridykhan@gmail.com \\
        \addr ENS Paris-Saclay
        \AND
        \name Emilie Kaufmann \email emilie.kaufmann@univ-lille.fr \\
        \addr CNRS, Universit\'e de Lille, Inria SequeL
        \AND
        \name Claire Vernade \email vernade@google.com \\
        \addr DeepMind, London
        \AND
        \name Richard Combes \email richard.combes@supelec.fr  \\
        \addr CentraleSup\'elec, Gif-sur-Yvette}

\editor{}

\maketitle

\begin{abstract}
\emph{Stochastic Rank-One Bandits} \cite{rank1Ber,stochasticRank1} are a simple framework for regret minimization problems over rank-one matrices of arms. The initially proposed algorithms are proved to have logarithmic regret, but do not match the existing lower bound for this problem. We close this gap by first proving that rank-one bandits are a particular instance of unimodal bandits, and then providing a new analysis of  Unimodal Thompson Sampling (UTS), initially proposed by \cite{UTS}. We prove an asymptotically optimal regret bound on the frequentist regret of UTS and we support our claims with simulations showing the significant improvement of our method compared to the state-of-the-art.

\end{abstract}

\begin{keywords}
  Multi-armed bandits, unimodal bandits, rank-one bandits.
\end{keywords}

\section{Introduction}
\label{sec:introduction}

We consider \emph{Stochastic Rank-One Bandits}, a class of bandit problems introduced by \cite{stochasticRank1}.
These models provide a clear framework for the exploration-exploitation problem of adaptively sampling the entries of a rank-one matrix in order to find the largest one.
Consider for instance the problem of finding the best design of a display, say for instance a colored shape to be used as a button on a website.
One may have at hand a set of different shapes, and a set of different colors to be tested.
A display is a combination of those two attributes, and a priori the tester has as many options as there are different pairs of shapes and colors.
Now let us assume  the effect of each factor is independent of the other factor.
Then,  the value of a combination, say for instance the click rate on the constructed button, is the product of the values of each of its attributes.
The better the shape, the higher the rate, and similarly for the color.
This type of independence assumptions is ubiquitous in click models such as the \emph{position-based model} \cite{chuklin2015click,richardson2007predicting}.
 It is also closely related to online learning to rank \cite{zoghi2017online} where sequential duels allow to find the optimal ordering of a list of options. We review the related literature in Section~\ref{sec:related} further below.


We formalize our example above into a Bernoulli rank-one bandit model \citep{rank1Ber}: this model is parameterized by two nonzero vectors $\bm u = (u_1,u_2,..u_K) \in [0,1]^K$ and $\bm v = (v_1,v_2,..v_L) \in [0,1]^L$. There are $K \times L$ arms, indexed by $(i,j) \in [K]\times[L]$, where we use the notation $[p] := \{1,\dots,p\}$ for any positive integer $p$. Each arm $(i,j)$ is associated with a Bernoulli distribution with mean $\mu_{(i,j)} :=u_i v_j$. Observe that the matrix of means $\bm\mu =\bm u\bm v^T$ has rank one, hence the name. We denote $\Theta$ the class of all such instances $(\bm u\times \bm v)$. At each time step $t$ an agent selects an arm $K(t) = (I(t),J(t)) \in [K]\times [L]$ and receives a reward $r(t) \sim \cB(\mu_{(I(t),J(t))})$, independently from previous rewards. To select $K(t)$, the agent may exploit the knowledge of previous observations and possibly some external randomness $U(t)$. Formally, letting $\cF_{t}$ denote the $\sigma$-field generated by $K(1),r(1),K(2),r(2),\dots,K(t),r(t)$, $K(t)$ is measurable with respect to $\sigma(\cF_{t-1},U(t))$.   

The objective of the learner is to adjust their selection strategy to maximize the expected total reward accumulated. The oracle or optimal strategy here is to always play the arm with largest mean. Thus, maximizing rewards is equivalent to designing a strategy $\cA$ with small \emph{regret}, where the $T$-step regret $ R_{\bm\mu}(T,\cA)$ is defined as the difference between the expected cumulative rewards of the oracle and the cumulative rewards of the strategy $\cA$:
\begin{equation}
  \label{eq:regret}
  R_{\bm\mu}(T,\cA) = \sum_{t=1}^T \left[\max_{(i,j)\in [K]\times[L] } \mu_{(i,j)} - \bE_{\bm\mu}[\mu_{(I(t),J(t))}]\right]\;. 
\end{equation}
Letting $i_\star = \text{argmax}_i u_i$ and $j_\star =\text{argmax}_{j} v_j$, we assume that $u_{\istar} > u_i$ for all $i\neq \istar$ and $v_{\star} > v_j$ for all $j \neq \jstar$. This assumption is equivalent to assuming that the rank-one bandit instance has a unique optimal action, which is $(i_\star,j_\star)=\text{argmax}_{(i,j)\in [K]\times[L]} \mu_{(i,j)}$. We let $\Theta_\star$ denote this class of rank-one instance with a unique optimal arm. In this paper, we will furthermore restrict our attention to rank-one models for which either $\bm u \succ 0$ or $\bm v \succ 0$. This assumption is not very restrictive, but it rules out the possibility that $u_i=0$ and $v_j=0$ for a certain arm $(i,j)$ (i.e. neither shape $i$ nor color $j$ attract any user). We found this assumption to be necessary to exhibit a unimodal structure in rank-one bandits. 

An algorithm is called \emph{uniformly efficient} if its regret is sub-polynomial in any instance $(\bm u\times \bm v) \in \Theta$. That is, for all $\alpha >0$, for all $(\bm u\times \bm v) \in \Theta$, $\mathcal{R}(T) = o(T^\alpha)$. In their paper, \cite{stochasticRank1} provide the first uniformly efficient algorithm, $\rankelim$, for stochastic rank-one bandits, and \cite{rank1Ber} propose an adaptation of this algorithm tailored for Bernoulli rewards, $\rankelimKL$.
They also provide a problem-dependent asymptotic lower bound on the regret in the line of \cite{LaiRobbinsLB}. This type of result gives a precise characterization of the regret for a specific instance of the problem that one should expect for any uniformly efficient algorithm. We report their result below.

\begin{proposition}\label{prop:LBClaire} For any algorithm $\cA$ which is uniformly efficient and for any Bernoulli rank-one bandit problem, $(\bm u\times \bm v) \in \Theta_\star$,
$$\liminf\limits_{T\rightarrow \infty} \frac{R_{\bm\mu}(\cA,T)}{\log(T)} \geq \sum_{i \in [K]\setminus i_\star} \frac{\mu_{i_\star,j_\star} - \mu_{i,j_\star}}{\kl(\mu_{i,j_\star},\mu_{i_\star,j_\star})} + \sum_{j \in [L]/j_\star} \frac{\mu_{i_\star,j_\star} - \mu_{i_\star,j}}{\kl(\mu_{i_\star,j},\mu_{i_\star,j_\star})}.$$
where $\kl(x,y) = x \ln(x/y) + (1-x)\ln((1-x)/(1-y))$ is the binary relative entropy.
\end{proposition}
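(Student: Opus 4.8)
The plan is to establish the bound by a change-of-measure argument in the spirit of Lai and Robbins, taking care that the confusing alternatives stay inside the rank-one class. Write $N_{(i,j)}(T)=\sum_{t=1}^T\mathbbm{1}\{K(t)=(i,j)\}$ for the number of pulls of arm $(i,j)$, so that $R_{\bm\mu}(\cA,T)=\sum_{(i,j)}\Delta_{i,j}\,\bE_{\bm\mu}[N_{(i,j)}(T)]$ with $\Delta_{i,j}=\mu_{i_\star,j_\star}-\mu_{i,j}$. The core tool is the transportation inequality: for any alternative instance $\bm\lambda$ and any $\cF_T$-measurable event $\mathcal{E}$, $\sum_{(i,j)}\bE_{\bm\mu}[N_{(i,j)}(T)]\,\kl(\mu_{i,j},\lambda_{i,j})\ge \kl(\mathbb{P}_{\bm\mu}(\mathcal{E}),\mathbb{P}_{\bm\lambda}(\mathcal{E}))$. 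Taking $\mathcal{E}$ to be the event that the empirically most-played arm is the one optimal under $\bm\mu$, and invoking uniform efficiency of $\cA$, yields for every $\bm\lambda$ whose optimal arm differs from $(i_\star,j_\star)$ that $\liminf_{T}\frac{1}{\log T}\sum_{(i,j)}\bE_{\bm\mu}[N_{(i,j)}(T)]\,\kl(\mu_{i,j},\lambda_{i,j})\ge 1$.

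I would then build the alternatives while keeping them rank-one. For a fixed suboptimal row $i\neq i_\star$, take $\bm\lambda^{(i)}=\tilde{\bm u}\bm v^T$ with $\tilde u_k=u_k$ for $k\neq i$ and $\tilde u_i=u_{i_\star}+\varepsilon$, so that $(i,j_\star)$ becomes the unique optimal arm and only the arms of row $i$ change law. The inequality above reads $\liminf_T\frac{1}{\log T}\sum_{j}\bE_{\bm\mu}[N_{(i,j)}(T)]\,\kl(u_iv_j,\tilde u_iv_j)\ge 1$. Here I invoke a monotonicity lemma: $t\mapsto\kl(t\alpha,t\beta)$ is nondecreasing, so that $\kl(u_iv_j,\tilde u_iv_j)\le\kl(u_iv_{j_\star},\tilde u_iv_{j_\star})$ because $v_j\le v_{j_\star}$. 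Factoring this worst-case term out of the sum and letting $\varepsilon\to0$ gives $\liminf_T\frac{1}{\log T}\sum_j\bE_{\bm\mu}[N_{(i,j)}(T)]\ge \kl(\mu_{i,j_\star},\mu_{i_\star,j_\star})^{-1}$. A symmetric construction raising $v_j$ to $\tilde v_j=v_{j_\star}+\varepsilon$ produces, for each column $j\neq j_\star$, the analogous divergence-weighted control of the column total $\sum_i\bE_{\bm\mu}[N_{(i,j)}(T)]$ involving $\kl(u_iv_j,u_i\tilde v_j)$.

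It remains to turn these per-row and per-column pull bounds into the claimed regret bound without double-counting the arms $(i,j)$, $i\neq i_\star$, $j\neq j_\star$, which feed both a row sum and a column sum. I would use the exact gap decomposition $\Delta_{i,j}=\Delta^u_i+u_i(v_{j_\star}-v_j)$, with $\Delta^u_i=(u_{i_\star}-u_i)v_{j_\star}=\mu_{i_\star,j_\star}-\mu_{i,j_\star}$, to split the regret as $\sum_i\Delta^u_i\,\bE_{\bm\mu}[\sum_j N_{(i,j)}(T)]+\sum_j(v_{j_\star}-v_j)\sum_i u_i\,\bE_{\bm\mu}[N_{(i,j)}(T)]$. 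The row bounds dispatch the first term into the first sum of the proposition. For the second term, working with the perturbed column alternative and a star-shape lemma stating that $t\mapsto\kl(tv_j,t\tilde v_j)/t$ is nondecreasing, I get $u_i\ge u_{i_\star}\,\kl(u_iv_j,u_i\tilde v_j)/\kl(u_{i_\star}v_j,u_{i_\star}\tilde v_j)$; hence $\sum_i u_i\bE_{\bm\mu}[N_{(i,j)}(T)]\ge \frac{u_{i_\star}}{\kl(u_{i_\star}v_j,u_{i_\star}\tilde v_j)}\sum_i\kl(u_iv_j,u_i\tilde v_j)\,\bE_{\bm\mu}[N_{(i,j)}(T)]$, and the column transportation bound lower-bounds the remaining sum by $\log T\,(1+o(1))$. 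Letting $\varepsilon\to0$ only at this stage replaces the denominator by $\kl(\mu_{i_\star,j},\mu_{i_\star,j_\star})$ and, using $u_{i_\star}(v_{j_\star}-v_j)=\mu_{i_\star,j_\star}-\mu_{i_\star,j}$, recovers exactly the second sum.

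The main obstacle is precisely this combination step: the two families of alternatives are not orthogonal, so a naive addition of per-row and per-column bounds would double-count, or lose a constant factor on, the doubly-suboptimal arms. The technical heart is therefore the pair of structural properties of $t\mapsto\kl(t\alpha,t\beta)$ — monotonicity and star-shapedness about the origin (the latter following from convexity together with $\kl(0,0)=0$) — which are exactly what allow the divergence $\kl(u_iv_j,u_i\tilde v_j)$ relevant to column $j$ to be reabsorbed against the multiplicative factor $u_i$ coming from the gap decomposition. Verifying these two properties of the binary relative entropy between proportionally-scaled means is the one genuinely computational ingredient I would check carefully.
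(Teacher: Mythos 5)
Your proof is correct, but it takes a genuinely different route from the paper, because the paper never proves Proposition~\ref{prop:LBClaire} at all: the bound is imported from \cite{rank1Ber}, and the paper's only internal justification is the observation in Section~2.2 that the unimodal lower bound of \cite{OSUB} (Proposition~\ref{prop:LBRichard}), specialized to $G_1$, has $\cN_{G_1}((\istar,\jstar))$ equal to the best row and column, so the two formulas coincide. Your argument is instead a direct change-of-measure proof inside the rank-one class, and this buys something real. The single-arm perturbations behind the unimodal bound (raise one $\mu_{i,j_\star}$ above $\mu_\star$, freeze everything else) produce matrices that are generically \emph{not} rank one, so the unimodal route only yields the bound for algorithms uniformly efficient on the strictly larger class $\cU(G_1)$; your alternatives $\tilde u_i = u_{i_\star}+\varepsilon$ and $\tilde v_j = v_{j_\star}+\varepsilon$ stay in $\Theta_\star$, which is exactly the hypothesis stated in the proposition. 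The price is the combination step you identified, which the graph-based argument never faces: a row (or column) perturbation changes $L$ (or $K$) arm laws at once, and without care the doubly-suboptimal entries get double-counted. Your resolution — the exact decomposition $\Delta_{i,j}=\Delta^u_i+u_i(v_{j_\star}-v_j)$ plus the two structural facts about $g(t)=\kl(t\alpha,t\beta)$ — is sound, and both facts do hold: $t\mapsto \bigl(\mathrm{Bern}(t\alpha),\mathrm{Bern}(t\beta)\bigr)$ is affine and $g(0)=0$, so joint convexity of the KL divergence gives convexity of $g$, hence $g(t)/t$ nondecreasing (star-shapedness), and combined with $g\geq 0$ this also gives monotonicity of $g$ itself.

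One caveat to record in a final write-up: the perturbations $\tilde u_i = u_{i_\star}+\varepsilon$ and $\tilde v_j=v_{j_\star}+\varepsilon$ require $u_{i_\star}<1$ and $v_{j_\star}<1$, whereas $\Theta_\star$ allows these values to equal $1$. This is repairable within your framework by exploiting the non-uniqueness of the rank-one factorization: if, say, $u_{i_\star}=1$ but $\mu_\star=v_{j_\star}<1$, first rescale $(\bm u,\bm v)\mapsto(\bm u/c,\,c\bm v)$ with $1<c\leq 1/v_{j_\star}$ (which leaves $\bm\mu$ and the class membership unchanged) and then perturb; the effective row parameter can now exceed $u_{i_\star}$ while all entries of the alternative stay in $[0,1]$. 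In the remaining case $\mu_\star=1$, every denominator $\kl(\cdot,\mu_\star)$ is infinite and the claimed bound is trivially $0$, so nothing is lost.
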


In contrast to this result, the \cite{LaiRobbinsLB} lower bound, which applies to algorithms that are uniformly efficient for \emph{any} reward matrix $\bm\mu$, involves a sum over all matrix entries $(i,j) \in [K]\times[L]$ instead of restricting to arms in the best row and in the best column of the matrix. Thus a good algorithm for the rank-one problem should manage to select all entries $(i,j)$ that are not in this best row and column only $o(\ln(T))$ times. However, neither $\rankelim$ nor $\rankelimKL$ achieve the asymptotic performance of Proposition~\ref{prop:LBClaire}: the regret upper bounds provided by \cite{stochasticRank1,rank1Ber} show much larger constants, and the empirical performance is not much tighter. A natural question one might ask then is: Is that lower bound achievable ?

\paragraph{Contributions}
The main contribution of this paper is to close this existing gap. To do so, we notice and prove that a stochastic rank-one bandit satisfying $\bm u \succ 0$ or $\bm v \succ 0$ is a particular instance of \emph{Unimodal Bandits} \citep{OSUB}.
Interestingly, when derived in the specific rank-one bandits setting, the $\osub$ algorithm proposed in the latter reference achieves the optimal asymptotic regret of Proposition~\ref{prop:LBClaire}.
Unifying those two apparently independent lines of work sheds a new light on stochastic rank-one bandits.

Indeed, follow-up works on unimodal bandits sought ways to construct more efficient algorithms than $\osub$. In particular \cite{UTS} propose $\uts$, a Bayesian strategy based on Thompson Sampling \citep{thompson1933likelihood}. Unfortunately, the theoretical analysis they provide does not allow to conclude an upper bound on the performance of their algorithm. We shall comment on that in Section~\ref{subsec:CommentsProofs}. Thus, a second major contribution of the present work is a new finite-time analysis of the frequentist regret of $\uts$ for Bernoulli stochastic rank-one bandits. Doing so, we provide an optimal regret bound for an efficient and easy-to-implement rank-one bandit algorithm.

Finally, our analysis provides new insights on the calibration of the leader exploration parameter which is present in other algorithms.

\paragraph{Outline} The paper is organised as follows. Section~\ref{sec:unimodal} proves that rank-one bandits are an instance of unimodal bandits, and describes the $\uts$ algorithm. The regret upper bound is proved in Section~\ref{sec:analysis}. In order to perform a fair empirical comparison with existing rank-one bandit algorithms, we give more background on this literature in Section~\ref{sec:related}. Finally, experiments in Section~\ref{sec:experiments} provide empirical evidence of the optimality of $\uts$ and show an improvement of an order of magnitude compared to the state-of-the-art $\rankelimKL$.

\section{Rank-One Bandits, a particular case of Unimodal Bandits}
\label{sec:unimodal}

In this section, we explain why the rank-one bandit model can be seen as a graphical unimodal bandit model as introduced by \cite{yu2011unimodal,OSUB}. For completeness, we recall the relevant definition.

\begin{definition} Given a undirected graph $G=(V,E)$, a vector $\bm\mu =(\mu_k)_{k\in V}$ is \emph{unimodal with respect to} $G$ if
\begin{itemize}
 \item there exists a unique $k_\star \in V$ such that $\mu_{k_\star} = \max_{i} \mu_i$
 \item from any $k \neq k_\star$, we can find an increasing path to the optimal arm. Formally: $\forall k \neq k_\star$, there exists a path $p = (k_1 = k, k_2, ..., k_{m_k} = k_\star)$ of length $m_k$, such that for all $i = 1, ..., m_k-1$, $(k_i,k_{i+1}) \in E$, and $\mu_{k_i} < \mu_{k_{i+1}}$.
\end{itemize}
We denote by $\cU(G)$ the set of vectors $\bm\mu$ that are unimodal with respect to $G$.
\end{definition}

A bandit instance is unimodal with respect to an undirected graph $G=(E,V)$ if its vector of means $\bm\mu = (\mu_k)_{k \in V}$ is unimodal with respect to $G$: $\bm\mu \in \cU(G)$. For a unimodal instance, we define the set of neighbors of an arm $k\in V$ as $\cN(k) = \{\ell : (k,\ell) \in E\}$. Without loss of generality, we can assume that $E$ does not contain self-edges $(k,k)$ (which do not contribute to increasing paths), therefore $k\notin \cN(k)$. The extended neighborhood of $k$ is defined as $\cN^+(k) = \cN(k)\cup \left\{k\right\}$.

In a unimodal bandit problem, the learner knows the graph G (hence the neighborhoods $\cN(k), \cN^+(k)$ for all $k\in V$), but not its parameters $\bm\mu$, which must be learnt adaptively by sampling the vertices of the graph.

\subsection{Rank-One Bandits are Unimodal} \label{unimodal_graph_rank1}

We define the undirected graph $G_1 =(V,E_1)$ as the graph with vertices $V = \{1,\dots,K\} \times \{1,\dots,L\}$ and such that $((i,j),(k,\ell)) \in E_1$ if and only if $(i,j) \neq (k,\ell)$ and ($i=k$ or $j=\ell$). In words, viewing the vertices as a $K \times L$ matrix, two distinct entries are neighbors if they belong to the same line or to the same column. In particular it can be observed that the graph $G_1$ has diameter two, and we shall exhibit below increasing paths of length at most two between any sub-optimal arm $(i,j)$ and the best arm $(i_\star,j_\star)$. 

The main result of this section is Proposition~\ref{prop:RankOneisUnimodal}. It allows us to build on the existing results for unimodal bandits in order to close the remaining theoretical gap in the understanding of rank-one bandits.

\begin{proposition} \label{prop:RankOneisUnimodal}
Let $\bm u = (u_1,u_2,..u_K)$ and $\bm v = (v_1,v_2,..v_L)$ be two nonzero vectors such that $\bm u \succ 0$ or $\bm v \succ 0$. A rank-one bandit instance parameterized by $\bm u, \bm v$ satisfies $\bm\mu \in \cU(G_1)$.
\end{proposition}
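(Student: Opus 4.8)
The plan is to verify the two conditions in the definition of unimodality directly. First I would confirm the uniqueness of the maximizer: since $\mu_{(i,j)} = u_i v_j$ and we have assumed $u_{\istar} > u_i$ for all $i \neq \istar$ and $v_{\jstar} > v_j$ for all $j \neq \jstar$, the arm $(\istar,\jstar)$ strictly dominates every other arm. Indeed, for any $(i,j) \neq (\istar,\jstar)$, at least one of the two inequalities $u_{\istar} \geq u_i$, $v_{\jstar} \geq v_j$ is strict, and together with the nonnegativity of the factors this forces $u_{\istar} v_{\jstar} > u_i v_j$ — here the hypothesis $\bm u \succ 0$ or $\bm v \succ 0$ is what prevents a degenerate tie such as $u_{\istar} v_{\jstar} = u_i v_j = 0$. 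So the optimal vertex $k_\star = (\istar,\jstar)$ is unique.

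The heart of the argument is exhibiting, from any suboptimal arm $(i,j)$, an increasing path of length at most two to $(\istar,\jstar)$ whose every edge lies in $E_1$. The natural candidate is to first move along the column to fix the row to $\istar$, then along the row to fix the column to $\jstar$, i.e. the path $(i,j) \to (\istar,j) \to (\istar,\jstar)$. Both transitions are legal edges of $G_1$: $(i,j)$ and $(\istar,j)$ share column $j$, while $(\istar,j)$ and $(\istar,\jstar)$ share row $\istar$. I would then check the strict increase along each step. The issue is that this particular two-step path is increasing only when the intermediate quantities cooperate, so the main work is to handle the cases where it is not.

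The hard part, and the step I expect to be the main obstacle, is precisely this monotonicity check, because one of the two factors can vanish and break the naive path. The step $(i,j) \to (\istar,j)$ has means $u_i v_j$ and $u_{\istar} v_j$; this is strictly increasing provided $v_j > 0$ (using $u_{\istar} > u_i$), but if $v_j = 0$ both means are zero and the step is not strictly increasing. Similarly the step $(\istar,j) \to (\istar,\jstar)$ is strictly increasing provided $u_{\istar} > 0$. So the clean argument works whenever $v_j > 0$ and $u_{\istar} > 0$. To cover the remaining cases I would use the symmetric path $(i,j) \to (i,\jstar) \to (\istar,\jstar)$, which is the right choice when instead $u_i > 0$, and split on which of $\bm u \succ 0$ or $\bm v \succ 0$ holds. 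Concretely, if $\bm v \succ 0$ then every $v_j > 0$ and (since $\bm v$ is nonzero and $\istar$ maximizes $u$) the first path works as soon as $u_{\istar} > 0$, which holds because if $u_{\istar} = 0$ then $\bm u \equiv 0$, contradicting that $\bm u$ is nonzero; the case $\bm u \succ 0$ is handled symmetrically. I would organize the proof by assuming without loss of generality that $\bm v \succ 0$ (the other case being symmetric by exchanging the roles of $\bm u$ and $\bm v$, of rows and columns), and then verify the two strict inequalities along the first path, noting that $u_{\istar} > 0$ is forced by $\bm u$ being nonzero together with $\istar = \arg\max_i u_i$. This confirms the existence of the required increasing path and completes the verification that $\bm\mu \in \cU(G_1)$.
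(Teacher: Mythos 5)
Your proposal follows essentially the same route as the paper's own proof: the same two-step paths $(i,j) \to (i_\star,j) \to (i_\star,j_\star)$ and $(i,j) \to (i,j_\star) \to (i_\star,j_\star)$, the same use of the positivity hypothesis to make each step strictly increasing, and the same observation that $u_{i_\star} > 0$ (resp. $v_{j_\star} > 0$) is forced by the vectors being nonzero. The only structural difference is that you split globally (WLOG $\bm v \succ 0$), whereas the paper splits per arm (for each suboptimal $(i,j)$, either $v_j \neq 0$ or $u_i \neq 0$); the two splits are equivalent here.

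There is, however, one boundary case your written verification does not cover: suboptimal arms that already lie in the optimal row or column, i.e.\ $i = i_\star$ (with $j \neq j_\star$) or $j = j_\star$ (with $i \neq i_\star$). If $i = i_\star$, the first ``step'' $(i,j) \to (i_\star,j)$ of your path is a self-loop, and the strict inequality $u_i v_j < u_{i_\star} v_j$ you propose to check becomes an equality — your parenthetical justification ``using $u_{i_\star} > u_i$'' is only valid when $i \neq i_\star$; symmetrically for $j = j_\star$. Since $G_1$ has no self-edges, these arms need the direct one-step path $(i,j) \to (i_\star,j_\star)$, which is an edge of $E_1$ and strictly increasing by uniqueness of the optimum; this is exactly the first case in the paper's proof, and the fix is immediate, but the definition of unimodality requires an increasing path from \emph{every} suboptimal arm, so the case must be stated. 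A second, harmless, inaccuracy: uniqueness of the maximizer does not actually rely on $\bm u \succ 0$ or $\bm v \succ 0$, as you suggest; it already follows from $u_{i_\star} = \max_i u_i > 0$ and $v_{j_\star} = \max_j v_j > 0$, which hold because both vectors are nonzero with nonnegative entries. The positivity hypothesis is needed only for the increasing-path property (e.g.\ $\bm u = (1,0)$, $\bm v = (1,0)$ has a unique optimum but no increasing path from $(2,2)$).
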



\paragraph{Proof} Let $\bm u = (u_1,u_2,..u_K)$ and $\bm v = (v_1,v_2,..v_L)$ be the two vectors parameterizing the rank-one bandit model, and denote  the best arm by $k_\star=(i_\star,j_\star)$. Then for any $(i,j) \in V$ with $(i,j) \neq k_\star$, one can find several increasing paths in $G_1$ from $(i,j)$ to $(i_\star,j_\star)$:
 \begin{itemize}
   \item If $i = i_\star$ or $i = j_\star$, then $(i,j) \rightarrow (i_\star,j_\star)$ is valid as $((i,j),k_\star) \in E_1$ and $\mu_{(i,j)} < \mu_{k_\star}$;
   \item Otherwise, first note that either $v_j\neq 0$ or $u_i \neq 0$. In the first case $(i,j) \rightarrow (i_\star,j) \rightarrow (i_\star,j_\star)$ is a valid increasing path. Indeed, $u_i < u_{i_\star}$ and $0<v_j< v_{j_\star}$ allow us to conclude that $\mu_{(i,j)} = u_i v_j < u_{i_\star} v_j = \mu_{(i_\star,j)} < u_{i_\star}v_{j_\star} = \mu_{(i_\star,j_\star)}$. In the second case, one can similarly show that $(i,j) \rightarrow (i,j_\star) \rightarrow (i_\star,j_\star)$ is a valid increasing path.
 \end{itemize}
\qed
 
  Figure~\ref{fig:Unimodal} below illustrates a possible optimal path in a rank-one bandit with $K=L=4$ and also shows the neighbors of a particular arm in the graph $G_1$.

  \begin{figure}[h]\centering
  $\begin{bmatrix}
      (u_1v_1) & (u_1v_2) & \boldsymbol{(u_1v_3)} & (u_1v_4)  \\
      (u_2v_1) & (u_2v_2) & \boldsymbol{(u_2v_3)} & (u_2v_4)   \\
      \boldsymbol{(u_3v_1)} & \boldsymbol{(u_3v_2)} & \boxed{{(u_3v_3)}} & \boldsymbol{(u_3v_4)}  \\
      (u_4v_1) & (u_4v_2) & \boldsymbol{(u_4v_3)} & (u_4v_4)  \\
  \end{bmatrix}$ \hspace{0.5cm}
  $\begin{bmatrix}
      \boldsymbol{(u_1v_1)} & (u_1v_2) & \boldsymbol{(u_1v_3)} &{(u_1v_4)}  \\
      (u_2v_1) & (u_2v_2) & (u_2v_3) & (u_2v_4)  \\
      (u_3v_1) & (u_3v_2) & \boldsymbol{(u_3v_3)} & (u_3v_4) \\
      (u_4v_1) & (u_4v_2) & (u_4v_3) & (u_4v_4)  \\
  \end{bmatrix}$
  \caption{$\cN((3,3))$ in bold (left). Increasing path from $(3,3)$ to $(i_\star =1, j_\star=1)$ (right).\label{fig:Unimodal}}
  \end{figure}

\subsection{Solving Unimodal Bandits}

In their initial paper, \cite{yu2011unimodal} propose an algorithm based on sequential elimination that does not efficiently exploit the structure of the graph.
\cite{OSUB} take over the unimodal bandit problem and provide a more in-depth analysis of the achievable regret in that setting. In particular,  their Theorem 4.1  states an asymptotic regret lower bound that we state below for Bernoulli rewards.

\begin{proposition} \label{prop:LBRichard}
  Let $G=(V,E)$ define a Bernoulli unimodal bandit problem, with $\cN_G(k) = \{\ell : (k,\ell) \in E\}$ denoting the set of neighbors of arm $k \in V$. Let $\cA$ be a uniformly efficient algorithm  for every Bernoulli bandit instance with means in $\cU(G)$.
  Then
\[\forall \bm\mu \in \cU(G), \ \ \liminf_{T\rightarrow \infty}  \frac{R_{\bm\mu}(\cA,T)}{\ln(T)} \geq \sum_{k \in \cN_G(k_\star)} \frac{\mu_{k_\star} - \mu_k}{\kl\left(\mu_k,\mu_{k_\star}\right)}.\]
\end{proposition}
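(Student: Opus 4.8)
The plan is to follow the classical change-of-measure scheme of \cite{LaiRobbinsLB}, in its transportation-lemma formulation, but to exploit the unimodal constraint so that only the neighbors of $k_\star$ generate informative alternative instances. Let $N_k(T) = \sum_{t=1}^T \mathbbm{1}\{K(t)=k\}$ denote the number of pulls of arm $k$ up to time $T$. The regret decomposes as $R_{\bm\mu}(\cA,T) = \sum_{k\in V}(\mu_{k_\star}-\mu_k)\,\bE_{\bm\mu}[N_k(T)]$, so since every term is non-negative it suffices to show $\liminf_T \bE_{\bm\mu}[N_k(T)]/\ln T \geq 1/\kl(\mu_k,\mu_{k_\star})$ for each neighbor $k\in\cN_G(k_\star)$ and then sum these contributions, using that the liminf of a sum dominates the sum of the liminfs.

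To bound a single $\bE_{\bm\mu}[N_k(T)]$, fix $k\in\cN_G(k_\star)$ and a small $\epsilon>0$, and build a confusing instance $\bm\lambda$ that coincides with $\bm\mu$ everywhere except at arm $k$, where the mean is raised to $\lambda_k = \mu_{k_\star}+\epsilon$. The decisive step — the one where the unimodal structure, rather than a generic bandit argument, enters — is to check that $\bm\lambda \in \cU(G)$, so that $\cA$ remains required to be uniformly efficient on it. In $\bm\lambda$ the arm $k$ is the unique maximizer for $\epsilon$ small enough; for any other vertex $\ell$ I would produce an increasing path to $k$ out of the increasing path to $k_\star$ guaranteed by $\bm\mu\in\cU(G)$: if that path avoids $k$ it is unchanged under $\bm\lambda$ and can be extended by the edge $k_\star\to k$ (legitimate since $(k,k_\star)\in E$ and $\mu_{k_\star}<\lambda_k$), whereas if it passes through $k$, the prefix up to the first visit of $k$ is already an increasing path to the new optimum. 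This is exactly where the restriction to neighbors is used: the single-coordinate perturbation making $k$ optimal preserves unimodality precisely because $k$ is adjacent to $k_\star$.

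Finally I would invoke the transportation/data-processing inequality: for any $\cF_T$-measurable $Z\in[0,1]$, $\bE_{\bm\mu}[N_k(T)]\,\kl(\mu_k,\lambda_k)\geq \kl(\bE_{\bm\mu}[Z],\bE_{\bm\lambda}[Z])$, since $\bm\mu$ and $\bm\lambda$ differ only at $k$. Taking $Z=N_k(T)/T$, uniform efficiency forces $\bE_{\bm\mu}[Z]\to0$ (as $k$ is sub-optimal under $\bm\mu$) and $\bE_{\bm\lambda}[Z]\to1$ (as $k$ is optimal under $\bm\lambda$, so all other arms are pulled $o(T^\alpha)$ times), whence the right-hand side is at least $(1-o(1))(1-\alpha)\ln T$. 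Dividing by $\kl(\mu_k,\lambda_k)\ln T$, sending $T\to\infty$, then $\alpha\to0$ and $\epsilon\to0$ (using continuity of $\kl$ in its second argument) yields $\liminf_T \bE_{\bm\mu}[N_k(T)]/\ln T \geq 1/\kl(\mu_k,\mu_{k_\star})$, and summing over neighbors gives the claim. I expect the only genuinely delicate point to be the unimodality verification of $\bm\lambda$; the change-of-measure computation is standard once the alternative class is shown to contain $\bm\lambda$.
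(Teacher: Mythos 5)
Your proposal is correct, but note that the paper does not actually prove Proposition~\ref{prop:LBRichard}: it imports it verbatim from Theorem~4.1 of \cite{OSUB}, so there is no in-paper proof to compare against. Your argument is a sound, self-contained derivation in the modern transportation-lemma style (divergence decomposition plus the data-processing inequality applied to $Z=N_k(T)/T$), whereas the cited reference reaches the same statement through the Graves--Lai semi-infinite linear program over confusing instances and then identifies its value. The two routes hinge on the identical structural fact, which you isolate and verify correctly: the single-coordinate perturbation $\lambda_k=\mu_{k_\star}+\epsilon$ stays in $\cU(G)$ precisely when $k\in\cN_G(k_\star)$, because any increasing path to $k_\star$ under $\bm\mu$ either avoids $k$ (and can be extended across the edge $(k_\star,k)$) or can be truncated at its first visit to $k$. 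Your direct approach buys simplicity and avoids the controlled-Markov-chain machinery, at the cost of only establishing the lower bound for this particular family of alternatives rather than characterizing the optimization problem whose value it is (which is all the proposition claims, so nothing is lost here).

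Two minor points you should make explicit for completeness. First, the perturbation requires $\mu_{k_\star}+\epsilon\leq 1$; when $\mu_{k_\star}=1$ one has $\kl(\mu_k,\mu_{k_\star})=+\infty$ for every $\mu_k<1$, so the stated bound is trivially true and this case can be excluded upfront. Second, when passing from $\liminf_T \bE_{\bm\mu}[N_k(T)]/\ln T\geq 1/\kl(\mu_k,\mu_{k_\star})$ to the regret bound, you rely on superadditivity of $\liminf$ over the finitely many nonnegative terms indexed by $\cN_G(k_\star)$, together with discarding the nonnegative contributions of non-neighbors; you state this, and it is exactly right, since uniqueness of the maximizer in $\cU(G)$ guarantees $\Delta_k>0$ and $\kl(\mu_k,\mu_{k_\star})>0$ for every $k\in\cN_G(k_\star)$.
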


In the particular case $G=G_1$, $\cN_{G_1}((\istar,\jstar)) = \{(i,j) : i=i_\star \text{ or }  j= j_\star \}\backslash \{(\istar,\jstar)\}$ and we recover Proposition~\ref{prop:LBClaire}. An asymptotically optimal algorithm for unimodal bandits therefore particularizes into an asymptotically optimal algorithm for rank-one bandits.

\subsection{Candidate algorithms and their analysis}

\label{subsec:CommentsProofs}

There exists only a few optimal algorithms for unimodal bandits.
\cite{OSUB} propose $\osub$, a computationally efficient algorithm that is proved to have the best achievable regret. \cite{UTS} propose a Bayesian alternative, however for reasons detailed below we believe their regret analysis does not hold as is.
Another valid algorithm would be $\ossb$ \citep{combes2017minimal}, a generic method for structured bandits, however its implementation for rank-one bandits is not obvious (the matrix of empirical mean $\hat{\bm\mu}(t)$ would need to have rank one), and its generality often makes it less empirically efficient when compared to algorithms exploiting a particular structure, like here the rank-one structure.

\paragraph{Notation} We now present the existing algorithms for unimodal bandits with respect to some undirected graph $G=(V,E)$. For $k \in V$, we let $N_k(t) = \sum_{s=1}^t \ind_{(K(s)= k)}$ be the number of selections of arm $k$ up to round $t$ and $\hat{\mu}_k(t) = \frac{1}{N_k(t)} \sum_{s=1}^t X_s \ind_{(K(s) = k)}$ be the empirical means of the rewards from that arm. We also define the (empirical) leader $L(t)= \text{argmax}_{k \in V} \ \hat{\mu}_k(t)$ and keep track of how many times each arm has been the leader in the past by defining $\ell_{k}(t) = \sum_{s=1}^t \ind_{\left(L(s) = k\right)}$.

\paragraph{Optimal Sampling for Unimodal Bandits ($\bm\osub$)}

$\osub$ \citep{OSUB}  is the adaptation of the $\kl$-UCB algorithm of \cite{cappe2013kullback}, an asymptotically optimal algorithm for (unstructured) Bernoulli bandits.
The vanilla $\kl$-UCB algorithm uses as upper confidence bounds the indices
\[\UCB_k(t) = \max \left\{ q : N_k(t) \kl\left(\hat{\mu}_k(t),q\right) \leq f(t) \right \},\]
and selects at each round the arm with largest index.

The idea of $\osub$ is to restrict $\kl$-UCB to the neighborhood of the leader while adding a \emph{leader exploration mechanism} to ensure that the leader gets ``checked'' enough and can eventually be trusted. Letting
\begin{equation}\widetilde{\UCB}_k(t) = \max \left\{ q : N_k(t) \kl\left(\hat{\mu}_k(t),q\right) \leq f(\ell_{L(t)} (t)) \right \},\label{def:IndexOSUB}\end{equation}
$\osub$ selects at time $t+1$
\begin{equation}A_{t+1} = \left\{\begin{array}{cl}
                   L(t) & \text{if } \ell_{L(t)} (t) \equiv 1 [\gamma ], \\
                    \argmax{k} \ \widetilde{\mathrm{u}}_k(t) & \text{else.}
                   \end{array}
\right.\label{def:LeaderExplo}\end{equation}
The parameter $\gamma$ quantifies how often the leader should be checked. $\osub$ is proved to be asymptotically optimal when $\gamma$ is equal to the maximal degree in $G +1$, which yields $\gamma = K + L -1$ for rank-one bandits. Compared to $\kl$-UCB, the alternative exploration rate $f(\ell_{L(t)} (t))$ that appears in the index \eqref{def:IndexOSUB} makes the analysis of $\osub$ quite intricate.


\paragraph{Unimodal Thompson Sampling ($\bm\uts$)}

For classical bandits, Thompson Sampling (TS) is known to be a good alternative to $\kl$-UCB as it shares its optimality property for Bernoulli distributions \citep{ALT12,thompson_samplingAG} without the need to tune any confidence interval 
and often with better performance. \cite{UTS} therefore naturally proposed Unimodal Thompson Sampling (UTS). The algorithm, described in detail in Section~\ref{sec:UTS}, consists in running Thompson Sampling instead of $\kl$-UCB in the neighborhood of the leader, while keeping a leader exploration mechanism similar to the one in \eqref{def:LeaderExplo}. The exploration parameter $\gamma$ should also be set to $K+L-1$ in the rank-one case in order to prove the asymptotic optimality of UTS.

The analysis proposed by \cite{UTS} (detailed in Appendix A of the extended version \cite{UTSarXiv}) hinges on adapting some elements of the Thompson Sampling proof of \cite{ALT12} and is not completely satisfying. Our main objection is the upper bound that is proposed on the number of times a sub-optimal arm $k$ is the leader (term $\cR_2$  of the second equation on page 8). To deal with this term, a quite imprecise reduction argument is given (definition of $\hat L_{k,t}$) showing that one essentially needs to control the quantity $\sum_{t=1}^T \bP\left(\hat{\mu}_k(t) \geq \hat{\mu}_{k_2}(t)\right)$ for Thompson Sampling playing in $\cN(k)$ and $k_2$ being the element with largest mean in this neighborhood. However, we do not believe this quantity can be easily controlled for Thompson Sampling, as we have to handle a random number of observations (that may be small) from both $k$ and $k_\star$. Besides, the upper bound on $\cR_2$ proposed by \cite{UTS} holds for the choice $\gamma = K+L-1$ in the rank-one case, which we show is unnecessary.

Due to the lack of accuracy of the existing proof, we believe that a new, precise analysis of Unimodal Thompson Sampling is needed to corroborate its good empirical performance for rank-one bandits, which we provide in the next section. Our analysis borrows elements from both the TS analysis of \cite{thompson_samplingAG} and that of \cite{ALT12}. It also reveals that unlike what was previously believed, the leader exploration parameter can be set to an arbitrary value $\gamma \geq 2$.

\section{Analysis of Unimodal Thompson Sampling}
\label{sec:analysis}

In this section, we present the \emph{Unimodal Thompson Sampling} algorithm ($\uts$) for Bernoulli rank-one bandits, and we state our main theorem proving a problem-dependent regret upper bound for this algorithm, which extends to the graphical unimodal case.

    \subsection{UTS for Rank-One Bandits}

$\uts$ is a very simple computationally efficient, anytime algorithm. Its pseudo-code for Bernoulli rank-one bandits is given in Algorithm~\ref{alg:UTS}. It relies on one integer parameter $\gamma \geq 2$ controlling the fraction of rounds spent exploring the leader.
After an initialization phase where each entry is pulled once, at each round $t>K\times L$, the algorithm computes the leader $L(t)$, that is the empirical best entry in the matrix. If the number of times $L(t)$ has been leader is multiple of $\gamma$, $\uts$ selects the empirical leader. The rest of the time, it draws a posterior sample for every entry in the same row and column as the leader, and selects the entry associated to the largest posterior sample. This can be viewed as performing Thompson Sampling in $\cN_{G_1}^+(L(t))$, the augmented neighborhood of the leader in the graph $G_1$ defined in Section~\ref{sec:unimodal}.

    \label{sec:UTS}

    \begin{algorithm}
      \caption{$\uts$ for Bernoulli rank-one bandits \label{alg:UTS}}
      \begin{algorithmic}
        \State \textbf{Input:} $\gamma \in \mathbb{N}, \gamma \geq 2$.
        \For{$(i,j) \in [K]\times[L]$}
        \State $N_{(i,j)} = 1$. $L_{(i,j)} = 0$.
        \State Draw arm $(i,j)$, receive reward $R$ and let $S_{(i,j)} = R$.
        \EndFor
        \For{$t = KL +1,\dots,T$}
          \State Compute the entry-wise empirical leader $L(t) = \argmax{(i,j) \in [K]\times [L]} \hat{\mu}_{i,j}(t)$;

          \vspace{-0.3cm}

          \State Update the leader count $L_{L(t)} \gets L_{L(t)} +1$

          \If{$L_{L(t)} \equiv 0\, [\gamma]$},
            \State $(I(t),J(t))=L(t)$
          \Else
            \For{$k \in \{(I(t),j):\, j\in [L]\} \bigcup \{(i,J(t)):\, i\in [K]\}$}
            \State $\theta_k \sim \text{Beta}\left(S_k +1, N_k - S_k + 1\right)$
            \EndFor
            \State $(I(t),J(t)) = \argmax{k} \ \theta_k$.
          \EndIf
          \State Receive reward $R_t \sim \cB(\mu_{(I_t,J_t)})$
          \State $N_{(I(t),J(t))} \gets N_{(I(t),J(t))} +1 $. $S_{(I(t),J(t))} \gets S_{(I(t),J(t))} +R_t $
        \EndFor
      \end{algorithmic}
    \end{algorithm}

    For completeness, we recall that given a prior distribution Thompson Sampling maintains a posterior distribution for each hidden Bernoulli parameter of the problem, that is, for each entry of the matrix. To do so, it uses a convenient uniform ($\text{Beta}(1,1)$)  prior, for which the posterior distribution is a $\text{Beta}$ distribution. We refer the interested reader to the recent survey \cite{russo2018tutorial} for more details on the topic.

\subsection{Regret upper bound and asymptotic optimality}

$\uts$ can be easily extended to any graphical unimodal bandit problem with respect to a graph $G$, by performing Thompson Sampling on $\cN_G^+$ instead of $\cN_{G_1}^+$. For this more general algorithm, we state the following theorem, which is our main technical contribution.

\begin{theorem}\label{theo_ub_uts}Let $\bm\mu$ be a graphical unimodal bandit instance with respect to a graph $G$. For all $\gamma \geq 2$, UTS with parameter $\gamma$ satisfies, for every $\epsilon > 0$,
         $$\mathcal{R}_{\bm\mu}(T,\uts(\gamma)) \leq (1+\epsilon)\sum_{k \in \cN(k_\star)}  \frac{(\mu_\star - \mu_k)}{\kl(\mu_k,\mu_\star)}\ln(T) + C(\bm\mu,\gamma,\epsilon),$$
where $C(\bm\mu,\gamma,\epsilon)$ is some constant depending on the environment $\bm\mu$, on $\epsilon$ and on $\gamma$.
\end{theorem}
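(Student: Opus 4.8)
The plan is to decompose the regret by writing $\mathcal{R}_{\bm\mu}(T,\uts) = \sum_{k \neq k_\star}(\mu_\star - \mu_k)\,\mathbb{E}[N_k(T)]$ and then controlling $\mathbb{E}[N_k(T)]$ for every suboptimal arm $k$. The key structural observation, which follows from the unimodality of $\bm\mu$ with respect to $G$ together with the mechanics of the algorithm, is that a suboptimal arm $k$ can only be selected when it belongs to the augmented neighborhood $\cN_G^+(L(t))$ of the current leader. I would therefore split a pull of $k$ according to which arm is the leader at that round, and further according to whether the pull is a leader-exploration round (occurring when the leader count is a multiple of $\gamma$) or a Thompson-Sampling round. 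This gives a bound of the form
\[
\mathbb{E}[N_k(T)] \leq \mathbb{E}\!\left[\sum_{t}\ind_{(K(t)=k,\,L(t)=k_\star)}\right] + \sum_{j \neq k_\star}\mathbb{E}\!\left[\sum_{t}\ind_{(K(t)=k,\,L(t)=j)}\right].
\]
The first term is where the true logarithmic cost lives: when the leader is genuinely $k_\star$, the algorithm runs Thompson Sampling on $\cN_G^+(k_\star)$, so pulls of the neighbor $k$ should be bounded by the standard TS analysis, yielding the $(1+\epsilon)(\mu_\star-\mu_k)/\kl(\mu_k,\mu_\star)\ln T$ contribution. The second family of terms — pulls of $k$ while a \emph{wrong} arm $j$ is the leader — must be shown to be $O(1)$ (or at worst $o(\ln T)$), since these do not appear in the lower bound of Proposition~\ref{prop:LBRichard}.

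For the first term, I would adapt the finite-time Thompson Sampling arguments of \cite{thompson_samplingAG} and \cite{ALT12}. The difficulty relative to vanilla TS is that the number of observations of $k_\star$ and of $k$ are random and coupled through the leader dynamics, so one cannot directly invoke the i.i.d.\ TS analysis. I would introduce a threshold $x_k$ strictly between $\mu_k$ and $\mu_\star$ and split pulls of $k$ into those where the posterior sample $\theta_k$ exceeds $x_k$ (controlled by the concentration of the Beta posterior of $k$, giving the $\kl(\mu_k,\mu_\star)$ denominator after optimizing $x_k$) and those where $\theta_{k_\star} \le x_k$ while $k_\star$ has been sufficiently sampled (controlled by the anti-concentration/lower-deviation bounds for the posterior of $k_\star$, contributing only a constant). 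The event that $k_\star$ is \emph{insufficiently} sampled is where I expect to lean on the leader-exploration mechanism: since the leader is pulled once every $\gamma$ rounds it is the leader, being the leader $\Theta(\ln T)$ times forces enough genuine samples of $k_\star$ to make the posterior reliable.

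The main obstacle, and the heart of the argument, is controlling the second family of terms, i.e.\ bounding the expected number of rounds in which a suboptimal arm $j \neq k_\star$ is the empirical leader. This is precisely the point the authors identify as flawed in the original $\uts$ analysis. My plan is to show $\sum_{t}\mathbb{P}(L(t)=j) = O(1)$ for every $j\neq k_\star$. The mechanism is that whenever $j$ is the leader, unimodality guarantees a neighbor $k'\in \cN_G^+(j)$ with $\mu_{k'} > \mu_j$ (on the increasing path toward $k_\star$), and Thompson Sampling plus the forced leader-exploration keep sampling within $\cN_G^+(j)$; the periodic leader checks (every $\gamma$ rounds) ensure $j$ itself accrues samples, so its empirical mean concentrates toward $\mu_j$ while $k'$ concentrates toward $\mu_{k'}>\mu_j$, eventually dethroning $j$. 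Quantifying this requires showing that the time $j$ remains leader has an exponentially decaying tail, which in turn rests on the fact that being leader for $m$ rounds produces at least $\lfloor m/\gamma\rfloor$ samples of $j$ and forces comparison with its dominating neighbor. Crucially, this is where I expect the parameter requirement to relax from $\gamma = K+L-1$ to merely $\gamma \ge 2$: any finite checking period suffices to accumulate the samples needed for concentration, and the constant $\gamma$ only affects the lower-order term $C(\bm\mu,\gamma,\epsilon)$, not the leading $\ln T$ coefficient. Summing the logarithmic first-term contributions over $k\in\cN(k_\star)$ and absorbing all the $O(1)$ contributions into $C(\bm\mu,\gamma,\epsilon)$ yields the stated bound.
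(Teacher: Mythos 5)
Your overall decomposition is the same as the paper's: split the regret into pulls made while the leader is $k_\star$ (call it $\mathcal{R}_1$) and pulls made while the leader is some suboptimal arm (call it $\mathcal{R}_2$), handle $\mathcal{R}_1$ by adapting the Thompson Sampling analysis of \cite{thompson_samplingAG} with thresholds between $\mu_k$ and $\mu_\star$, and show $\mathcal{R}_2 = O(1)$ by bounding $\sum_t \bP(L(t)=j)$ for each suboptimal $j$. Your treatment of $\mathcal{R}_1$, including the caveat about the coupled and random sample counts, matches the paper's Lemma~\ref{R_1_bound}.

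However, there is a genuine gap in your argument for $\mathcal{R}_2$, and it is exactly the gap the paper identifies in the original $\uts$ analysis. You argue that when $j\neq k_\star$ is the leader, the forced leader exploration makes $\hat{\mu}_j$ concentrate around $\mu_j$ while the dominating neighbor $k'$ ``concentrates toward $\mu_{k'}>\mu_j$, eventually dethroning $j$,'' with an exponentially decaying tail on the time $j$ stays leader. But the leader-exploration mechanism only forces samples of the \emph{leader} $j$ itself; nothing forces samples of $k'$. The arm $k'$ is sampled only on Thompson rounds, and if its first few observations were unlucky, its posterior can sit well below $\mu_j$ for a long time, so Thompson Sampling may simply not draw it — concentration of $\hat{\mu}_{k'}$ is precisely what needs to be proved, not assumed. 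This is why the paper splits $\sum_t\bP(L(t)=j)$ according to whether the best neighbors have many samples ($N_{k_2}(t) > (\ell_j(t))^b$, handled by Hoeffding plus the forced samples of $j$) or few samples, and the latter case requires the heavy machinery of Lemma~\ref{lem:TermeRelou}: a stopping-time construction ($\tau_i^k$), interval subdivision, the saturated/unsaturated-arm induction of \cite{ALT12}, and the Beta-Binomial trick, all adapted to handle the fact that arms in the neighborhood are also updated while other arms are leader. Note also that the resulting decay is polynomial (terms like $i^{-(2-b)}$ and $x^{-\lambda}$), not exponential, so even the shape of the tail you expect is too optimistic. Without an argument lower-bounding the number of Thompson draws of the best neighbor of a wrong leader, your bound on $\mathcal{R}_2$ does not go through.
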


A consequence of this finite-time bound is that, for every parameter $\gamma \geq 2$,
\[\limsup_{T\rightarrow\infty}\frac{\mathcal{R}_{\bm\mu}(T,\uts(\gamma))}{\ln(T)} \leq \sum_{k \in \cN(k_\star)}  \frac{(\mu_\star - \mu_k)}{\kl(\mu_k,\mu_\star)},\]
therefore $\uts(\gamma)$ is asymptotically optimal for any graphical unimodal bandit problem. Particularizing this result to rank-one bandits, one obtains that Algorithm~\ref{alg:UTS} has a regret which is asymptotically matching the lower bound in Proposition~\ref{prop:LBClaire}.

Unlike previous work, in which logarithmic regret is proved only for the choice $\gamma = K+L -1$ in the rank-one case\footnote{For general unimodal bandits, \osub{} sets $\gamma$ to be the maximal degree of an arm, whereas \uts{} adaptively sets $\gamma$ to be the degree of the current leader. Both parameterization coincide for rank-one bandits.}, we emphasize that this result holds for any choice of the leader exploration parameter. We conjecture that $\uts$ without any leader exploration scheme is also asymptotically optimal. However, our experiments of Section~\ref{sec:experiments} reveal that this particular kind of ``forced exploration'' is not hurting for rank-one bandits, and that the choice $\gamma = 2$ actually leads to the best empirical performance.

    \subsection{Proof of Theorem~\ref{theo_ub_uts}} \label{sec:proof_uts}
 We consider a general $K$-armed graphical unimodal bandit problem with respect to some graph $G$ and let $K(t)$ denote the arm selected at round $t$. We recall some important notations defined in Section~\ref{subsec:CommentsProofs}: the number of arms selections $N_k(t)$, the empirical means $\hat{\mu}_k(t)$, the
leader as $L(t) = \text{argmax}_{k} \ \hat{\mu}_k(t)$, and the
    number of times arm $k$ has been the leader up to time $t$:
    $\ell_k(t) = \sum_{s=1}^t \ind\left(L(s) = k\right)$.
    Observe that the leader exploration scheme ensures that \begin{equation}\forall k \in \{1,\dots,K\}, \forall t\in \N, N_k(t) \geq \lfloor \ell_k(t) / \gamma\rfloor.\label{LeaderExplo}\end{equation}

    Introducing the gap $\Delta_k = \mu_\star - \mu_k$, recall that the regret rewrites $\sum_{k \neq k_\star} \Delta_k \bE_{\bm\mu} [N_k(T)]$. Just like in the analysis of \cite{OSUB,UTS}, we start by distinguishing the times when the leader is the optimal arm, and the times when the leader is a sub-optimal arm:
\begin{align*}
&\cR_{\bm\mu}(T,\text{UTS}(\gamma)) = \sum_{k \neq k_\star} \Delta_k \mathbb{E} \left[ \sum_{t=1}^T \mathbbm{1} (K(t)=k) \right] \\
&= \underbrace{\sum_{k \neq k_\star} \Delta_k \mathbb{E} \left[ \sum_{t=1}^T \mathbbm{1} (K(t)=k, L(t) = k_\star) \right]}_{\mathcal{R}_1(T)} + \underbrace{\sum_{k \neq k_\star} \Delta_k \mathbb{E} \left[ \sum_{t=1}^T \mathbbm{1} (K(t)=k, L(t) \neq k_\star) \right]}_{\mathcal{R}_2(T)}\;.
\end{align*}

To upper bound $\mathcal{R}_1 (T)$, it can be noted that when $k_\star$ is the leader, the selected arm $k$ is necessarily in the neighborhood of $k_\star$, hence the sum can be restricted to the neighborhood of $k_\star$. Therefore, we expect to upper bound $\mathcal{R}_1 (T)$ by the same quantity which upper bounds the regret of Thompson Sampling restricted to $\cN^+(k_\star)$. Such an argument is used for KL-UCB and Thompson Sampling by \cite{OSUB} and \cite{UTS} respectively, without much justification. However, a proper justification does need some care, as between two times the leader is $k_\star$, UTS may update the posterior of some arms in $\cN^+(k_\star)$ for they belong to the neighborhoods of other potential leaders.

In this work, we carefully adapt the analysis \cite{thompson_samplingAG}
to get the following upper bound. The proof can be found in Appendix~\ref{appendix:R_1_bound}.

\begin{lemma} \label{R_1_bound} For all $\epsilon >0$ and all $T \geq 1$,
$$\mathcal{R}_1(T) \leq (1+\epsilon)\sum_{k \in N(k_*)}  \frac{\Delta_k}{\kl(\mu_k,\mu_\star)} \ln(T) +\Tilde{C}(\bm\mu,\epsilon),$$
for some quantity $\Tilde{C}(\bm\mu,\epsilon)$ which depends on the means $\bm\mu$ and on $\epsilon$ but not on $T$.
\end{lemma}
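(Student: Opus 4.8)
The plan is to adapt the optimal Thompson Sampling analysis of \cite{thompson_samplingAG} to the rounds on which $k_\star$ is the leader. First I would make two reductions. On a leader-exploration round ($\ell_{k_\star}(t)\equiv 0\,[\gamma]$) we have $K(t)=L(t)=k_\star$, so such rounds never select a suboptimal arm and contribute nothing to $\mathcal{R}_1(T)$; on any other round with $L(t)=k_\star$ the arm is chosen as $\argmax{k\in\cN^+(k_\star)}\theta_k(t)$, so $K(t)\in\cN^+(k_\star)$. Hence $\mathcal{R}_1(T)=\sum_{k\in\cN(k_\star)}\Delta_k\,\bE\big[\sum_t \ind(K(t)=k,\,L(t)=k_\star,\,\text{TS round})\big]$, and it suffices to control, for each neighbor $k$, the number of times $k$ is selected by Thompson Sampling over $\cN^+(k_\star)$ while $L(t)=k_\star$.

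Following \cite{thompson_samplingAG}, I would fix $\epsilon>0$ and, for each $k\in\cN(k_\star)$, choose thresholds $\mu_k<x_k<y_k<\mu_\star$ close enough to $\mu_k$ and $\mu_\star$ that $1/\kl(x_k,y_k)\le(1+\epsilon)/\kl(\mu_k,\mu_\star)$, which is possible by continuity of the binary relative entropy. I would then split each event $\{K(t)=k\}$ into three cases: (a) $\hat\mu_k(t)>x_k$; (b) $\hat\mu_k(t)\le x_k$ and $\theta_k(t)>y_k$; and (c) $\theta_k(t)\le y_k$. Term (a) is $O(1)$ by concentration of the empirical mean $\hat\mu_k$ above $\mu_k$ (the extra selections of $k$ on rounds with $L(t)\ne k_\star$ only increase $N_k$ and hence sharpen this bound). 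Term (b) is where the logarithmic contribution appears: reindexing by the number of observations $n=N_k(t)$, the part with $n\le \ln(T)/\kl(x_k,y_k)$ is at most $\ln(T)/\kl(x_k,y_k)\le (1+\epsilon)\ln(T)/\kl(\mu_k,\mu_\star)$, while for larger $n$ the Beta-posterior upper tail $\bP(\theta_k(t)>y_k\mid \hat\mu_k(t)\le x_k)$ is summably small and contributes $O(1)$.

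The step I expect to be the main obstacle is term (c), the ``optimal-arm'' term, and this is exactly the difficulty the paper flags: between two rounds on which $k_\star$ is the leader, $\uts$ may resample and update the posteriors of arms in $\cN^+(k_\star)$ — including $k_\star$ itself, which can be played as a neighbor of a different leader — so one cannot simply invoke a black-box guarantee for vanilla Thompson Sampling run on the fixed arm set $\cN^+(k_\star)$. My resolution is to avoid any such reduction and argue per round. Conditionally on $\cF_{t-1}$ all posteriors are frozen and the samples are independent, so the selection inequality of \cite{thompson_samplingAG} applies on any round with $L(t)=k_\star$, on which $k_\star$ lies in the selection set: writing $p_{k,t}=\bP(\theta_{k_\star}(t)>y_k\mid\cF_{t-1})$, a function only of the posterior of $k_\star$,
$$\bP\big(K(t)=k,\,\theta_k(t)\le y_k\mid\cF_{t-1}\big)\le \frac{1-p_{k,t}}{p_{k,t}}\,\bP\big(K(t)=k_\star\mid\cF_{t-1}\big).$$
Since $p_{k,t}$ depends only on the number of draws of $k_\star$ and its rewards — which remain i.i.d.\ $\cB(\mu_\star)$ irrespective of the scheduling — taking expectations and using that $p_{k,t}$ is constant between consecutive pulls of $k_\star$ reduces the right-hand side to $\bE\big[\sum_{j\ge 0}\tfrac{1-p_{k,j}}{p_{k,j}}\big]$, a finite constant by the Beta-tail estimates of \cite{thompson_samplingAG}. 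Crucially, counting \emph{all} pulls of $k_\star$ (rather than only those on leader rounds) makes $N_{k_\star}(t)$ grow faster and hence only helps, so term (c) is $O(1)$. Collecting the three bounds, multiplying by $\Delta_k$ and summing over $k\in\cN(k_\star)$ yields the claimed inequality, with $\tilde{C}(\bm\mu,\epsilon)$ absorbing the three constants.
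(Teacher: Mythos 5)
Your proposal is correct and follows essentially the same route as the paper's proof: the same Agrawal--Goyal-style decomposition into the three events governed by thresholds $x_k<y_k$ chosen so that $\kl(x_k,y_k)=\kl(\mu_k,\mu_\star)/(1+\epsilon)$, with terms (a) and (b) handled by empirical-mean and posterior concentration, and the optimal-arm term handled exactly as in the paper's Lemma~\ref{TS_1} --- a per-round conditional selection inequality (valid since $\{L(t)=k_\star\}$ and the round type are $\cF_{t-1}$-measurable), followed by dropping the leader constraint and reindexing over \emph{all} pulls of $k_\star$ so that $\sum_j \bE\bigl[(1-p_{k,j})/p_{k,j}\bigr]<\infty$ gives a constant. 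The only cosmetic difference is that your third event omits the intersection with $E_k^\mu(t)$, which is harmless since the selection inequality does not require it.
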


We now upper bound $\cR_2(T)$, which can be related to the probability of choosing any given suboptimal arm $k$ as the leader:
\begin{eqnarray*}
\cR_2(T) &\leq & \sum_{\ell \neq k_\star}\sum_{k \neq k_\star} \Delta_k \mathbb{E} \left[ \sum_{t=1}^T \mathbbm{1} (K(t)=k, L(t) = \ell) \right] \\ & \leq  &\sum_{\ell \neq k_\star} \sum_{t=1}^T\bE\left[\ind(L(t) = \ell) \sum_{k\neq k_\star} \ind(K(t) = k)\right]  =  \sum_{k \neq k_\star} \sum_{t=1}^T\bP\left(L(t) = k\right).
\end{eqnarray*}
For each $k\neq k_\star$, we define the set of best neighbors of $k$, $\cB_{\cN(k)} = \text{argmax}_{\ell\in \cN(k)} \mu_\ell$. Due to the unimodal structure, we know this set is nonempty because there exists at least one arm $\ell \in \cN(k)$ such that $\mu_\ell > \mu_k$ (such an arm belongs to the path from $k$ to $k_\star$). All arms belonging to $\cB_{\cN(k)}$ have same mean, that we note $\mu_{k_2} = \max_{\ell\in \cN(k)} \mu_\ell$. We also introduce $\tilde{B} = \max_{k \in [K] \setminus\{k_\star\}} |\cB_{\cN(k)}|$, the maximal number of best arms in the neighborhood of all sub-optimal arms, which is bounded by the maximum degree of the graph. With this notation, one can write, for any $b \in (0,1)$,
\begin{eqnarray*}
\sum_{t=1}^T\bP\left(L(t) = k\right) &= & \underbrace{\sum_{t=1}^T\bP\left(L(t) = k, \exists k_2 \in \cB_{\cN(k)}, N_{k_2}(t) > (\ell_k(t))^b\right)}_{\cT^k_1(T)} \\
&& \hspace{0.2cm} + \underbrace{\sum_{t=1}^T\bP\left(L(t) = k, \forall k_2 \in \cB_{\cN(k)}, N_{k_2}(t) \leq (\ell_k(t))^b\right)}_{\cT^k_2(T)}
\end{eqnarray*}

The first term can be easily upper bounded by using the fact that if both arm $k$ and one of its best neighbors $k_2 \in \cB_{\cN(k)}$ are selected enough, it is unlikely that $\hat{\mu}_k(t) \geq \hat{\mu}_{k_2}(t)$. 

On the event $\{L(t) = k\}$, the empirical mean of the $k$-th arm is necessarily greater than that of the other arms (especially those in $\cB_{\cN(k)}$) . Therefore, letting $\delta_k = \frac{\mu_{k_2} - \mu_k}{2}$,

\begin{eqnarray}
 \cT_1^k(T) & =&  \sum_{t=1}^T \bP\left(L(t)=k, \exists k_2 \in \cB_{\cN(k)}, \hat{\mu}_k(t) \geq \hat{\mu}_{k_2}(t), N_{k_2}(t) > (\ell_k(t))^b\right)\nonumber \\
 & \leq &  \sum_{t=1}^T \bP\left(L(t)=k, \hat{\mu}_k(t) > \mu_k + \delta_k, N_{k}(t) > \lfloor\ell_k(t)/\gamma\rfloor\right) \label{FirstPart}\\
 && \hspace{0.2cm} + \sum_{t=1}^T \bP\left(L(t)=k, \exists k_2 \in \cB_{\cN(k)}, \hat{\mu}_{k_2}(t) \leq \mu_{k_2} - \delta_k , N_{k_2}(t) > (\ell_k(t))^b\right), \label{SecondPart}
\end{eqnarray}
where in \eqref{FirstPart}, we have used the leader exploration mechanism \eqref{LeaderExplo}. \eqref{FirstPart} and \eqref{SecondPart} can be upper bounded in the same way, by introducing the sequence of stopping times $(\tau_{i}^k)_i$, where $\tau_{i}^k$ is the instant at which arm $k$ is the leader for the $i$-th time (one can have $\tau_{i}^k > T$ or $\tau_{i}^k = +\infty$ if arm $k$ would be the leader only a finite number of time when $\uts$ is run forever).
\begin{align*}
\eqref{SecondPart} & \leq \sum_{k_2 \in \cB_{\cN(k)}} \sum_{i=1}^T\sum_{t=1}^T \mathbb{E}[\ind(L(t)=k,\ell_k(t) = i, \hat{\mu}_{k_2}(t) \leq \mu_{k_2} - \delta_k, N_{k_2}(t)>i^b)] \\
&= \tilde{B} \sum_{i=1}^T \bP\left(\hat{\mu}_{k_2}(\tau_{i}^k) \leq \mu_{k_2} - \delta_k , N_{k_2}(\tau_i^k) > i^b,  \tau_i^k \leq T\right) \\
&\leq \tilde{B} \sum_{i=1}^T \sum_{u=i^b}^T \bP\left(\hat{\mu}_{k_2,u} \leq \mu_{k_2} -  \delta_k, N_{k_2}(\tau_i^k) = u \right) \\
&\leq \tilde{B} \sum_{i=1}^\infty \sum_{u=i^b}^\infty \exp(-2\delta_k^2u) \leq \tilde{B}\sum_{i=1}^\infty \frac{\exp(-2\delta_k^2i^b)}{1-\exp(-2\delta_k^2)}.
\end{align*}
The notation $\hat{\mu}_{k_2,u}$ used above denotes the empirical mean of the first $u$ observations from arm $k_2$, which are i.i.d. with mean $\mu_{k_2}$. Thus, Hoeffding's inequality can be applied to obtain the last but one inequality.

To upper bound \eqref{FirstPart} we use the same approach (with $i^b$ replaced by $\lfloor i/\gamma\rfloor$), which yields
\[\cT_1^k(T) \leq \sum_{i=1}^\infty \frac{\exp(-2\delta_k^2i^b)}{1-\exp(-2\delta_k^2)} + \sum_{i=1}^\infty \frac{\exp(-{2\delta_k^2}\lfloor i/\gamma\rfloor)}{1-\exp(-2\delta_k^2)} := C_k(\bm\mu,\gamma,b) < \infty.\]

To finish the proof, we upper bound $\cT_2^k(T)$ for some well chosen value of $b \in (0,1)$. The upper bound given in Lemma~\ref{lem:TermeRelou} is a careful adaptation (and generalization) of the proof of Proposition 1 in \cite{ALT12}, which says that for vanilla Thompson Sampling restricted to $\cN^+(k_\star)$, the (unique) optimal arm $k_2$ cannot be drawn too few times. Observe that Lemma~\ref{lem:TermeRelou} permits to handle possible multiple optimal arms.
Again, we emphasize that in UTS, there is an extra difficulty due to the fact that arms in $\cN^+(k_\star)$ are not only selected when $k$ is the leader. The proof of Lemma~\ref{lem:TermeRelou}, given in Appendix~\ref{appendix:TermeRelou} overcomes this difficulty.

\begin{lemma}\label{lem:TermeRelou} When $\gamma \geq 2$, there exists $b\in (0,1)$ and a constant $D_k(\bm\mu,b,\gamma)$ such that
\[\sum_{t=1}^T \bP\left(L(t) = k, \forall k_2 \in \cB_{\cN(k)}, N_{k_2}(t) \leq (\ell_k(t))^b\right) \leq D_k(\bm\mu,b,\gamma).\]
\end{lemma}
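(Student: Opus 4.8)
The plan is to follow the reduction the authors have set up for $\cR_2$ and recast $\cT_2^k(T)$ in terms of the stopping times $\tau_i^k$ at which $k$ becomes leader for the $i$-th time. On the event $\{L(t)=k\}$ with $\ell_k(t)=i$ one has $t=\tau_i^k$, so
$$\cT_2^k(T) = \sum_{i=1}^{T} \bP\Big(\tau_i^k \le T,\ \forall k_2\in\cB_{\cN(k)}:\ N_{k_2}(\tau_i^k)\le i^b\Big).$$
Since the ``for all'' event is contained in the single-arm event for any fixed best neighbor, it suffices to fix one $k_2\in\cB_{\cN(k)}$ (recall $\mu_{k_2}=\max_{\ell\in\cN(k)}\mu_\ell>\mu_k$ by unimodality) and prove $\sum_i \bP(\tau_i^k\le T,\,N_{k_2}(\tau_i^k)\le i^b)<\infty$, reinserting the factor $\tilde{B}$ at the end. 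The whole point is thus to show that, as $k$ keeps being elected leader, its best neighbor $k_2$ cannot remain under-sampled -- the unimodal analogue of Proposition~1 in \cite{ALT12}.

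The key mechanism is that whenever $k$ is the leader and the round is not a leader-exploration round, UTS performs Thompson Sampling on $\cN^+(k)$, of which $k_2$ is a highest-mean arm. Because $\gamma\ge 2$, among the first $i$ leader-occurrences of $k$ at least $i(1-1/\gamma)\ge i/2$ (up to a constant) are such TS rounds; this is the only place the exploration parameter enters, and it is why any $\gamma\ge 2$ suffices rather than $\gamma=K+L-1$. I would then introduce, for each $j$, the number $Y_j$ of Thompson-Sampling-on-$\cN^+(k)$ rounds elapsing between the $j$-th and $(j+1)$-th pull of $k_2$, and observe that $\{N_{k_2}(\tau_i^k)\le i^b\}$ forces $\sum_{j=0}^{\lfloor i^b\rfloor} Y_j \ge c\,i$ for some $c>0$, since $\sim i/2$ TS rounds occur while $k_2$ is pulled at most $i^b$ times. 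The plan is then to control $Y_j$ by showing that, conditionally on the past, in each TS round $k_2$ is selected with probability bounded below: once $k_2$ has been drawn a few times its posterior concentrates near $\mu_{k_2}$, while $k$ (pulled at least $\lfloor\ell_k(t)/\gamma\rfloor$ times by \eqref{LeaderExplo}) and every neighbor of mean $<\mu_{k_2}$ eventually concentrate strictly below $\mu_{k_2}$; the diffuse, under-sampled neighbors that could momentarily outbid $k_2$ get themselves drawn and concentrate, contributing only a finite burn-in. This yields light-tailed $Y_j$ with $\sum_{j\le i^b}\bE[Y_j]=O(i^b)=o(i)$ (here $b<1$ is used), so that a Chernoff bound on the random sum makes $\bP(\sum_{j\le i^b}Y_j\ge c\,i)$ summable in $i$ and produces the constant $D_k(\bm\mu,b,\gamma)$.

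I expect the main obstacle to be exactly the point the authors flag: the arms of $\cN^+(k)$, and $k_2$ in particular, are also pulled when $k$ is not the leader, so the posterior of $k_2$ is not frozen between its pulls and the clean geometric-waiting-time argument of \cite{ALT12} does not transfer verbatim. I would resolve this with a stopping-time/coupling argument requiring only a lower bound on the per-round selection probability of $k_2$ in TS rounds: extra external pulls of $k_2$ can only raise $N_{k_2}$ (which is what we want), and extra external pulls of the competing neighbors only accelerate their concentration below $\mu_{k_2}$, so neither can invalidate the lower bound. Making this rigorous -- fixing the thresholds separating $\mu_k$, the intermediate neighbors and $\mu_{k_2}$, bounding the Beta-posterior anti-concentration for the under-sampled $k_2$ against the concentration of the well-sampled competitors, and converting the dependent sum $\sum_j Y_j$ into a genuinely summable tail -- is where the careful adaptation of \cite{ALT12} resides, and it also dictates the admissible choice of $b\in(0,1)$.
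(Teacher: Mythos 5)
Your reduction to the stopping times $\tau_i^k$ matches the paper's first step, but your next move --- fixing a single best neighbor $k_2'\in\cB_{\cN(k)}$ and claiming it suffices to prove $\sum_i \bP\left(\tau_i^k\le T,\, N_{k_2'}(\tau_i^k)\le i^b\right)<\infty$ --- opens a genuine gap. The containment $\{\forall k_2\in\cB_{\cN(k)}:\, N_{k_2}(\tau_i^k)\le i^b\}\subseteq\{N_{k_2'}(\tau_i^k)\le i^b\}$ is valid, but the resulting single-arm probability is in general \emph{not} summable when $|\cB_{\cN(k)}|\ge 2$ (ties are possible in rank-one instances, e.g.\ $u_iv_{j_\star}=u_{i_\star}v_j$, and are explicitly allowed in the unimodal setting). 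With two tied best neighbors, Thompson Sampling allocates pulls between them in a P\'olya-urn-like fashion: the fraction going to any \emph{fixed} one of them converges to a nondegenerate random proportion with mass near $0$, so $\bP(N_{k_2'}(\tau_i^k)\le i^b)$ decays only polynomially like $i^{b-1}$ and its sum diverges. The paper avoids this by bounding the ``for all'' event by the event on the \emph{aggregate} count, $N_{\cB_{\cN(k)}}(\tau_i^k)\le \tilde B i^b$, and running the entire argument on that total (its frozen-posterior lemma is phrased as ``there exists $k_2$ pulled at least $j/\tilde B$ times and not pulled during the interval''); indeed the paper stresses that this lemma is what permits handling multiple best neighbors, and your reduction silently discards exactly that feature.

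The second gap is in your key mechanism. You want a per-round selection probability of $k_2$ in TS rounds that is ``bounded below,'' yielding light-tailed waiting times $Y_j$ and a Chernoff bound. No such uniform lower bound exists: on the (only geometrically rare, hence non-negligible for the small counts $j\le i^b$ at stake) histories where $k_2$'s first $j$ draws give a misleadingly low empirical mean, its Beta posterior concentrates \emph{below} $\mu_k$, and the probability that its sample beats even the heavily-sampled arm $k$ decays geometrically in $j$. The whole difficulty is the interplay between ``bad empirical mean for $k_2$'' and ``long stretch with no pull of $k_2$,'' which is what the Beta--Binomial trick together with the bound $\bE\bigl[\bigl(1-F^{\text{Bin}}_{j+1,\mu_{\tilde k}+\delta}(X)\bigr)^x\bigr]\le (\alpha_{\mu_{\tilde k},\delta})^x + C\,e^{-jd}/x^{\lambda}$ of \cite{ALT12} is designed for, and what the paper's saturation argument (pigeonholing a long no-pull interval out of the event $\{N_{\cB_{\cN(k)}}(\tau_i^k)\le \tilde B i^b\}$, subdividing it into $\tilde M_k+1$ pieces, inducting on the number of saturated arms, then choosing $b<1-1/\lambda$ for summability) makes rigorous; note that on the paper's event the best neighbors receive literally no pulls --- leader rounds or not --- during the gap, which disposes of the ``external pulls'' issue without any coupling. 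Your proposal instead defers exactly these points to ``making this rigorous,'' together with unproven monotonicity claims (e.g.\ that extra pulls of competitors ``only accelerate their concentration''), so the core of the lemma remains unproved.
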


Putting things together, one obtains, for all $\epsilon>0$, with $b$ chosen as in Lemma~\ref{lem:TermeRelou},
\[\cR_{\bm\mu}(\cA,T) \leq (1+\epsilon)\sum_{k \in N(k_*)}  \frac{\Delta_k}{\kl(\mu_k,\mu_\star)} \ln(T) + \Tilde{C}(\bm\mu,\epsilon) + \sum_{k\neq k_\star}\left[C_k(\bm\mu,\gamma,b) + D_k(\bm\mu,b,\gamma)\right],\]
which yields the claimed upper bound. 

\section{Related Work on Rank-One Bandits}
\label{sec:related}

Multi-armed bandits are a rich class of statistical models for sequential decision making (see \cite{lattimore2018bandit, bubeck2012regret} for two surveys).
They offer a clear framework as well as computationally efficient algorithms for many practical problems such as online advertising \cite{zoghi2017online}, a context in which the empirical efficiency of Thompson Sampling \citep{thompson1933likelihood} has often been noticed  \citep{Scott10,chapelle2011empirical}.
The wide success of Bayesian methods in bandit or reinforcement learning problems can no longer be ignored \cite{russo2018tutorial,osband2017posterior}.

As already mentioned, stochastic rank-one bandits were introduced by \cite{stochasticRank1,rank1Ber} which are indeed among the closest works related to ours.
The original algorithm proposed therein, $\rankelim$, relies on a complex sequential elimination scheme.
It operates in stages that progressively quadruple in length. At the end of each stage, the significantly worst rows and columns are eliminated; this is done using carefully tuned confidence intervals.
The exploration is simple but costly: every remaining row is played with a randomly chosen remaining column, and conversely for the columns.
At the end of the stage, the value of each row is computed by averaging over all columns, such that the estimate of the row parameter is scaled by some measurable constant that is \emph{the same} for all rows.
Then,  $\ucb$ or $\klucb$ confidence intervals are used to perform the elimination by respectively $\rankelim$ or $\rankelimKL$.
The advantage of this method is that the worst rows and columns disappear very early from the game. However, eliminating them requires that their confidence intervals no longer intersect, which is quite costly.
Moreover, the averaging performed to compute individual estimates for each parameter may be arbitrarily bad: if all columns but one have a parameter close to zero, the scaling constant on the row estimates is close to zero and the rows become hard to distinguish.
All those issues are mentioned in the according papers.
Nonetheless, the advantage of a rank-one algorithm, as opposed to playing a vanilla bandit algorithm, on a large (typically $64\times64$) matrix remains perfectly significant, which has motivated various further work on the topic.

In particular, \cite{kveton2017stochastic} generalizes this elimination scheme to low-rank matrices, where the objective is to discover the $d\times d$ best set of entries. \cite{jun2019bilinear} modify a bit the problem and formulate it as \emph{Bilinear bandits}, where the two chosen vector arms $x_t$ and $y_t$ have an expected payoff of $x_t^\top M y_t$, where $M$ is a low-rank matrix. \cite{kotlowski2019bandit} study an adversarial version of this problem, the \emph{Bandits Online PCA}: the learner sequentially chooses vectors $x_t$ and observes a loss $x_tx_t^\top L_t$, where the loss is arbitrarily and possibly adversarially chosen by the environment. \cite{zimmert2018factored} considers a more general problem where matrices are replaced by rank-one tensors in dimension $d\geq 2$. The main message of the paper is to propose a unified view of \emph{Factored Bandits} encompassing both rank-one bandits and dueling bandits \cite{yue2009interactively}.

\section{Numerical Experiments}
\label{sec:experiments}

To assess the empirical efficiency of $\uts$ against other competitors, we follow the same experimental protocol as \cite{rank1Ber} and run the algorithm on simulated matrices of arms of increasing sizes.
We set $K=L$ for different values of $K$. The parameters are defined symmetrically: $\bm u=\bm v =(0.75,0.25, \ldots, 0.25)$ such that the best
entry of the matrix is always $(i^*,j^*)=(1,1)$. In our experiments, the cumulative regret up to an horizon $T=300000$ is estimated based on $100$ independent runs. The shaded areas on our plots show the 10\% percentiles.

\paragraph{Study of hyperparameter $\bm\gamma$} \label{UTS_study_gamma}
According to the original paper, the exploration parameter of $\uts$ should be set to $\gamma = K+L-1$ for rank-one bandits. However, in the proof we derived in Section \ref{sec:analysis}, there is no need to fix $\gamma$ to this value. To confirm this statement and study the influence of $\gamma$, we ran  $\uts$ on a the $K=4$ toy problem described above, with different values of $\gamma \in \{2,\, 5,\, 10,\, 20\}$. We also run the heuristic version of $\uts$ that would use no leader exploration scheme (corresponding to $\gamma = +\infty$).

On Figure~\ref{fig:study_gamma_4_4}, we show the cumulative regret in log-scale. We notice that all curves align with the optimal logarithmic rate, with a lower offset for lower values of $\gamma$. Empirically, the performance seems the best for $\gamma=2$.

 \begin{figure}[h]
     \begin{center}
     \includegraphics[width=0.6\textwidth]{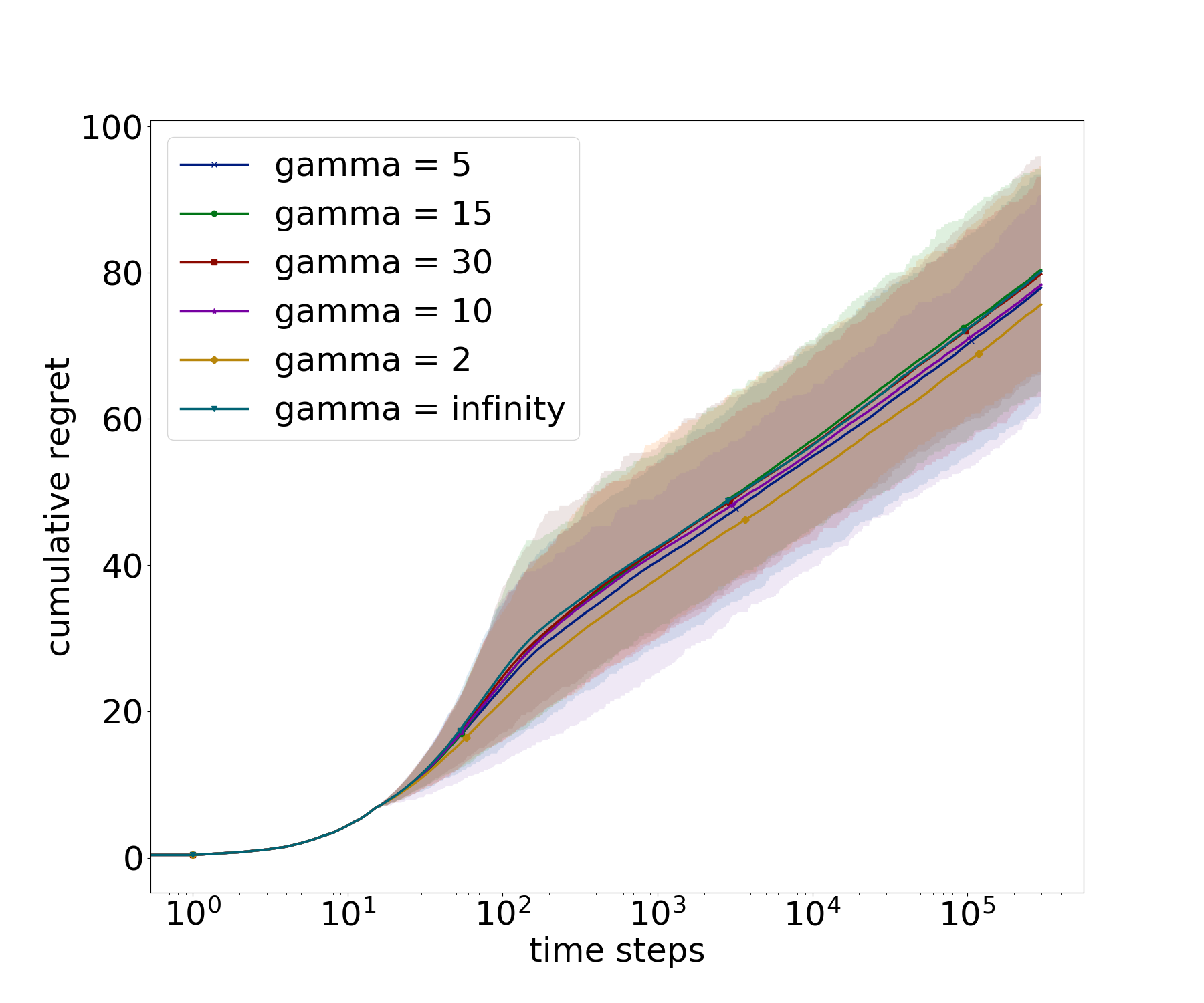}
     \caption{Cumulative regret of $\uts$ for $\gamma$ varying in $\{2,5,10,20,+\infty\}$ for $K=4$.\label{fig:study_gamma_4_4}}
   \end{center}

 \end{figure}

 \begin{figure}[p]
  \includegraphics[width=0.45\textwidth]{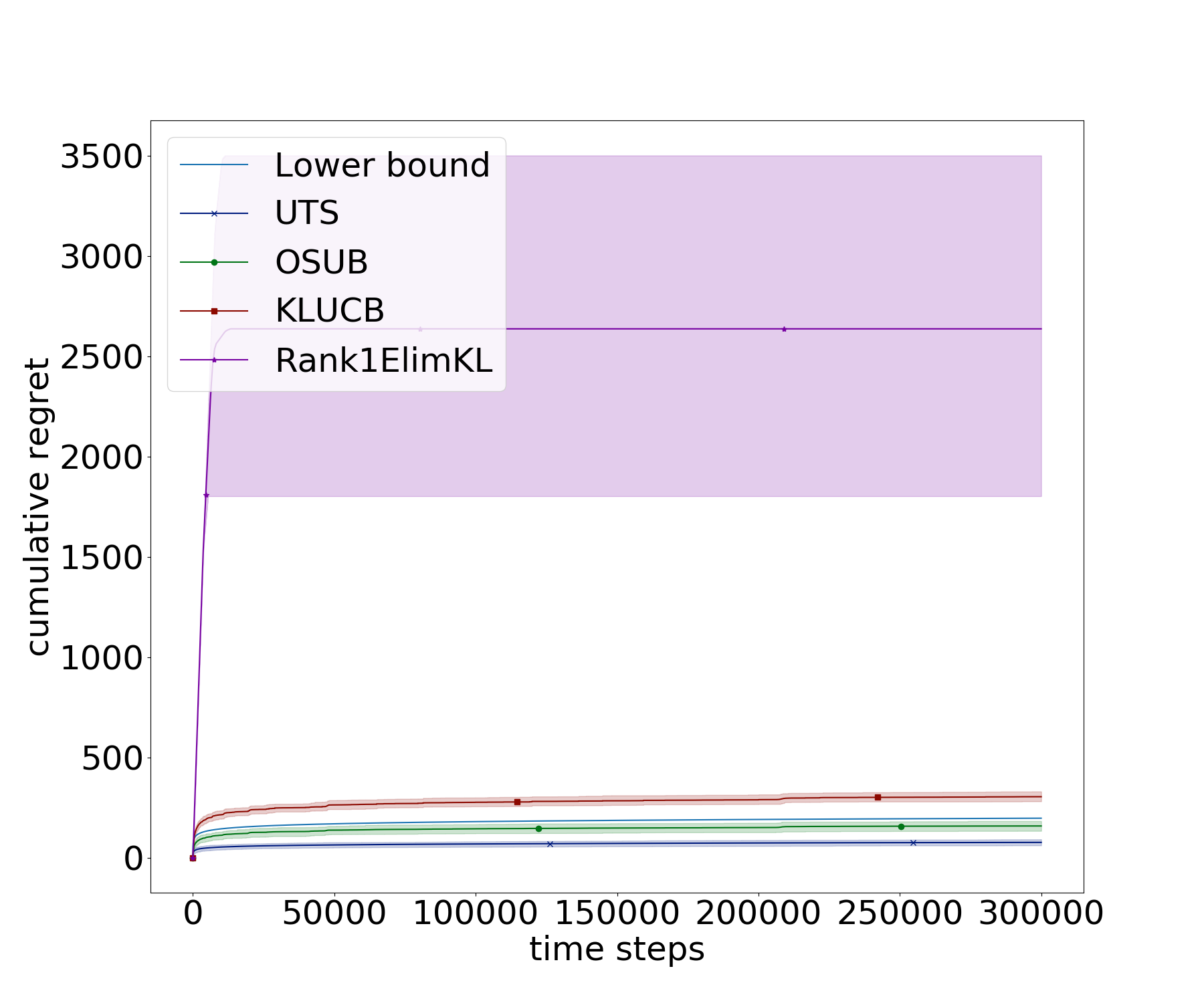}
  \includegraphics[width=0.45\textwidth]{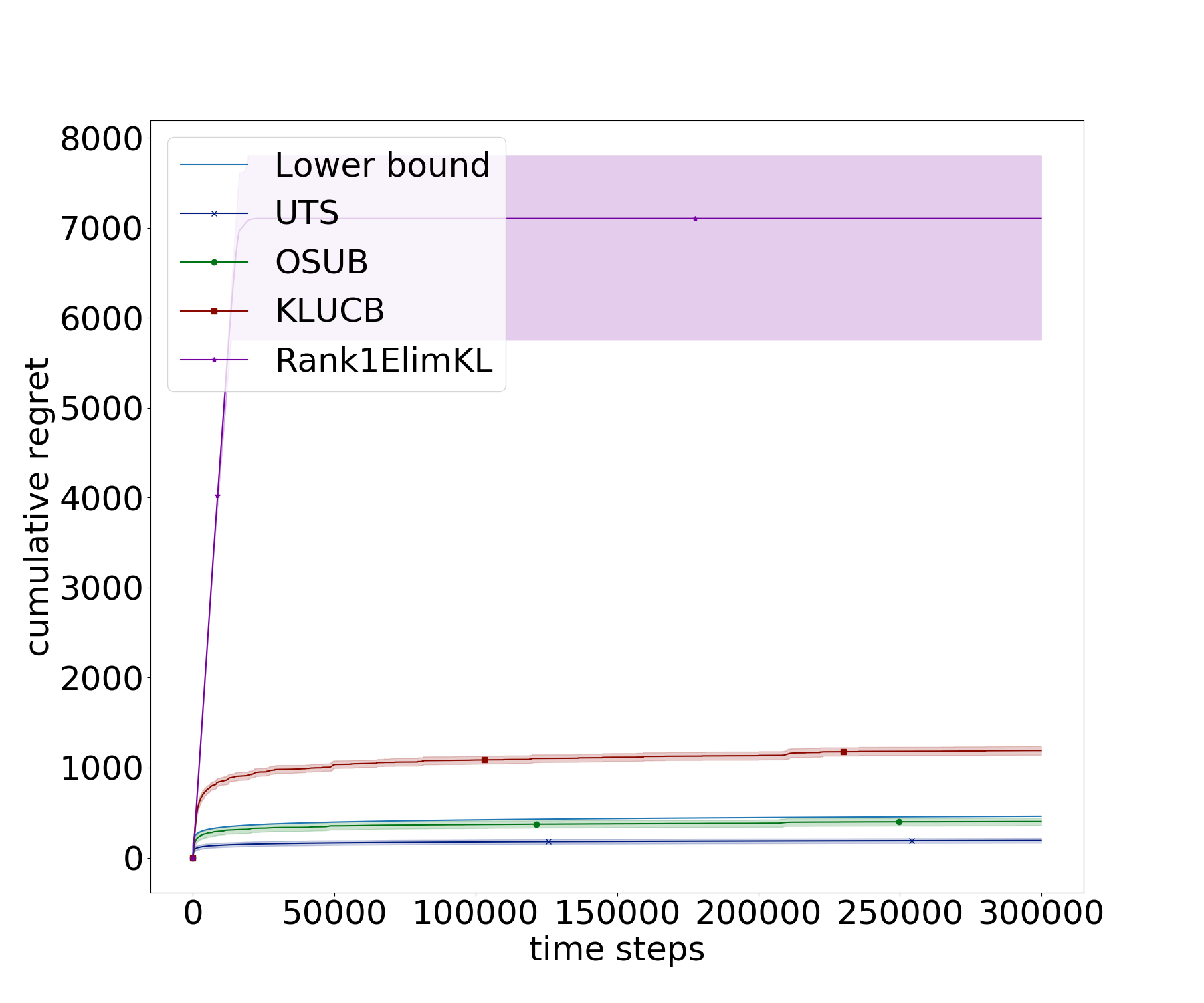}
  \centering
  \includegraphics[width=0.45\textwidth]{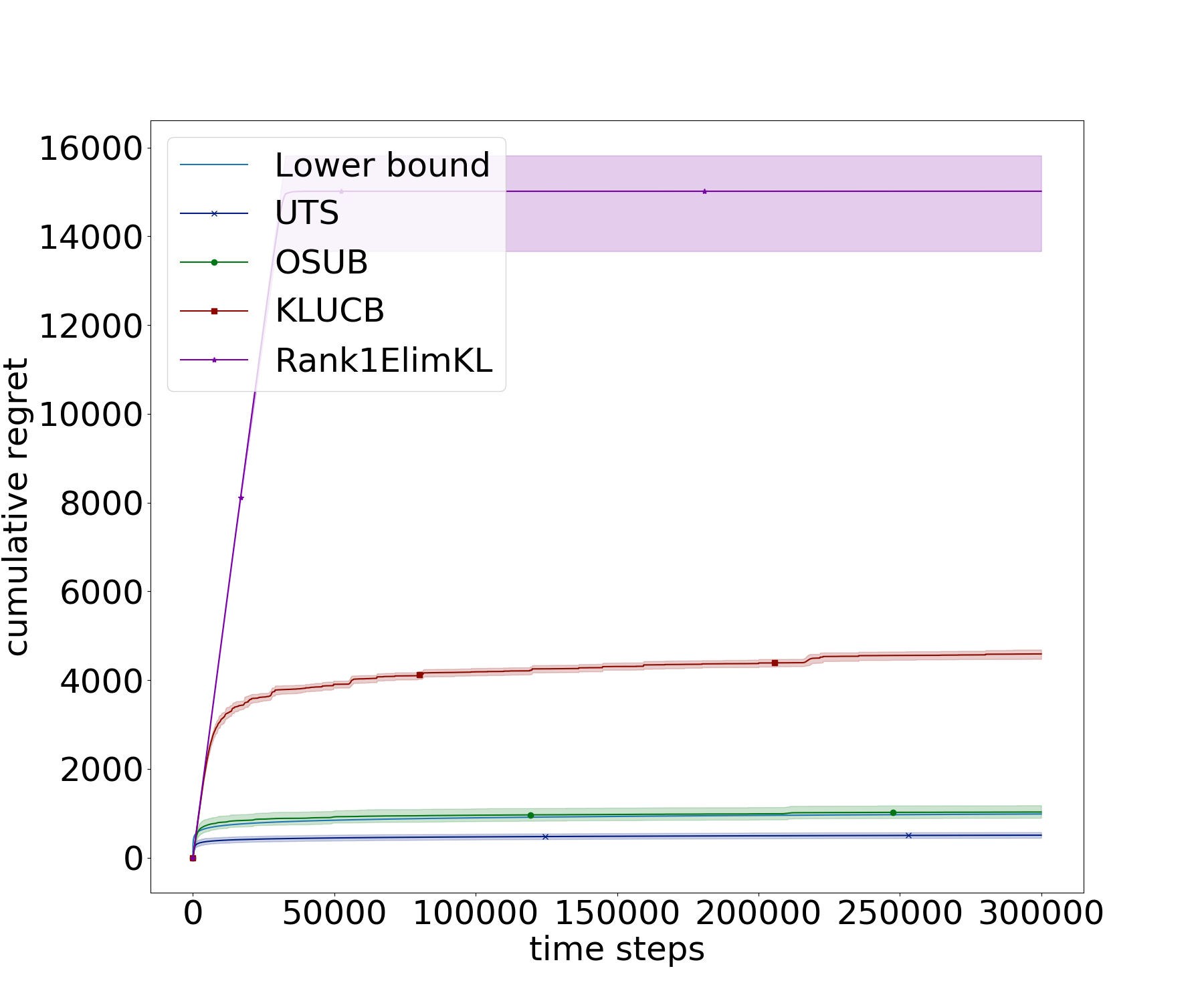}
  \caption{Cumulative regret of $\rankelimKL$, $\osub$, $\uts$ and $\klucb$, on $K\times K$ rank-one matrices with $K=4$ (top left), $K=8$ (top right) and $K=16$ (bottom) \label{fig:regrets}}
\end{figure}

\paragraph{Cumulative regret and optimality of $\bm\uts$. } We now compare the regret of $\uts$ run with $\gamma=2$ to that of other algorithms on the above mentioned family of instances for different values of $K$ in $\{4,8,16\}$. Note that in \cite{rank1Ber}, the simulations are run on larger matrices, for $K=32,64,128$.
In those settings, $\rankelimKL$ only outperforms $\klucb$ for $K=128$ but it is  better than $\rankelim$ and one can easily see that it scales better with the problem size than UCB1.
However, given the much better performance of $\uts$ and $\osub$, we were able to show the same trends with much smaller problem sizes.

In Figure~\ref{fig:regrets} we compare the cumulative regret of $\rankelimKL$ with $\osub$, $\uts$ (with $\gamma=2$) and $\klucb$.
One first obvious observation is that $\rankelimKL$ has a regret an order of magnitude larger than all other policies, including $\klucb$ on this size of problems.
 We also notice that the final regret, at $T=300K$, roughly doubles for all rank-one policies while it quadruples for $\klucb$, as expected.
To illustrate the asymptotic optimality of $\osub$ and $\uts$ compared to $\klucb$, we show on Figure~\ref{fig:logregret} the results of the $K=4$ simulations in log-scale, and we plot the lower bound of Proposition~\ref{prop:LBClaire}.
 We observe that both optimal policies asymptotically align with the lower bound, while $\klucb$ adopts a faster growth rate, that indeed corresponds to the constant of Lai \& Robbins.

\begin{figure}[h]
  \begin{center}
    \includegraphics[width=0.5\textwidth]{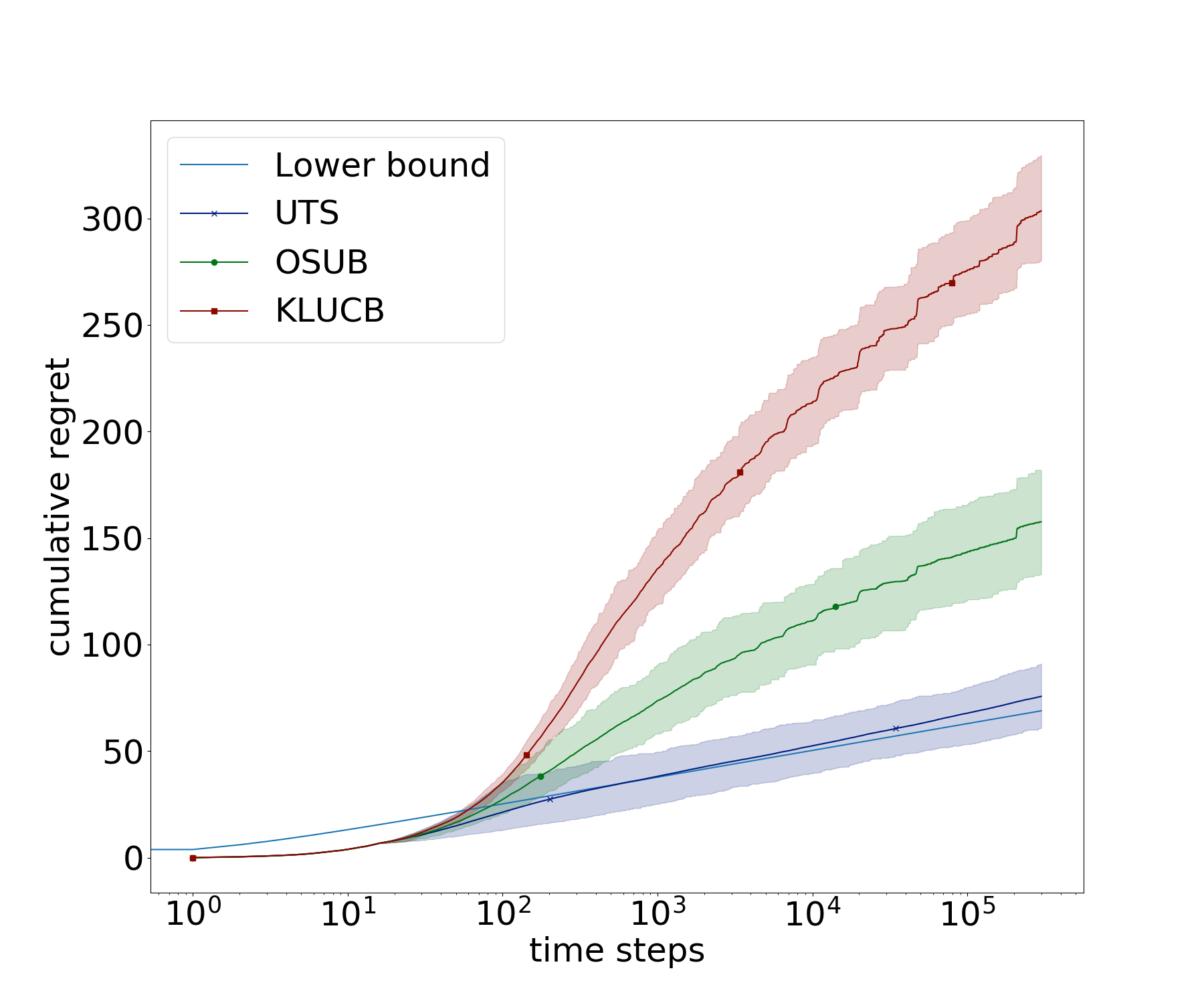}
  \end{center}
  \caption{Regret for $K=4$ in log-scale: the lower bound (in blue) shows the optimal asymptotic logarithmic growth of the regret. $\uts$ and $\osub$ align with it, while $\klucb$ has a larger slope. }\label{fig:logregret}
\end{figure}

\section{Conclusion}

This paper proposed a new perspective on the rank-one bandit problem by showing it can be cast into the unimodal bandit framework. This led us to propose an algorithm closing the gap between existing regret upper and lower bound for Bernoulli rank-one bandits: Unimodal Thompson Sampling ($\uts$).
$\uts$ is easy to implement and very efficient in practice, as our experimental study reveals an improvement of a factor at least 20 with respect to the state-of-the art $\rankelimKL$ algorithm. Our main theoretical contribution is a novel regret analysis of this algorithm in the general unimodal setting, which sheds a new light on the leader exploration parameter to use. Interestingly, forcing exploration of the leader appears to help in practice in the rank-one example, and it may be interesting to investigate whether this remains the case for other structured bandit problems \citep{combes2017minimal}.

\paragraph{Acknowledgement} The authors acknowledge the French National Research Agency under projects BADASS (ANR-16-CE40-0002) and BOLD (ANR-19-CE23-0026-04).

\bibliography{mybib}

\newpage
\appendix

\section{Important Results} 

We recall two important results that are repeatedly used in our analysis. 

\begin{lemma} \label{hoeffding} (Hoeffding's inequality)
Let $X_1, ..., X_n$ be independent bounded random variables supported in $[0,1]$. For all $t \geq 0$,
$$\bP \left( \frac{1}{n} \sum_{i=1}^n (X_i - \bE[X_i]) \geq t \right) \leq \exp (-2nt^2)$$
and 
$$\bP \left( \frac{1}{n} \sum_{i=1}^n (X_i - \bE[X_i]) \leq -t \right) \leq \exp (-2nt^2)$$
\end{lemma}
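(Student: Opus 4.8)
\section*{Proof proposal for Hoeffding's inequality (Lemma~\ref{hoeffding})}

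The plan is to prove the one-sided upper-tail bound via the standard Chernoff exponentiation argument; the lower-tail bound then follows by applying the upper-tail result to the variables $-X_i$ (which are supported in $[-1,0]$, equivalently replacing each $X_i$ by $1-X_i \in [0,1]$), so I only carry out the upper tail in detail. Fix $t \geq 0$ and introduce centered variables $Y_i = X_i - \bE[X_i]$, which are independent and satisfy $\bE[Y_i]=0$. For any $s > 0$, Markov's inequality applied to the nonnegative random variable $\exp\bigl(s \sum_{i=1}^n Y_i\bigr)$ gives
$$\bP\Bigl(\tfrac{1}{n}\sum_{i=1}^n Y_i \geq t\Bigr) = \bP\Bigl(\sum_{i=1}^n Y_i \geq nt\Bigr) \leq e^{-snt}\,\bE\Bigl[\exp\Bigl(s\sum_{i=1}^n Y_i\Bigr)\Bigr] = e^{-snt}\prod_{i=1}^n \bE\bigl[e^{sY_i}\bigr],$$
where the last equality uses independence to factor the expectation of the product.

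The crux is then the moment-generating-function estimate for a single centered, bounded variable, usually called \emph{Hoeffding's lemma}: if $Y$ satisfies $\bE[Y]=0$ and $Y$ takes values in an interval $[a,b]$, then $\bE[e^{sY}] \leq \exp\bigl(s^2(b-a)^2/8\bigr)$. In our case each $X_i \in [0,1]$, so $Y_i \in [a_i, b_i]$ with $b_i - a_i \leq 1$, giving $\bE[e^{sY_i}] \leq e^{s^2/8}$. I would prove Hoeffding's lemma by the convexity argument: since $e^{sy}$ is convex, for $y\in[a,b]$ one has $e^{sy} \leq \frac{b-y}{b-a}e^{sa} + \frac{y-a}{b-a}e^{sb}$; taking expectations and using $\bE[Y]=0$ yields $\bE[e^{sY}] \leq \frac{b}{b-a}e^{sa} - \frac{a}{b-a}e^{sb} =: e^{\phi(s)}$. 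A direct computation shows $\phi(0)=\phi'(0)=0$ and $\phi''(s) \leq (b-a)^2/4$ for all $s$ (the second derivative is the variance of an auxiliary Bernoulli-type law, hence bounded by a quarter of the range squared), so Taylor's theorem with remainder gives $\phi(s) \leq s^2(b-a)^2/8$.

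Combining the two pieces, the product bound becomes $\prod_{i=1}^n \bE[e^{sY_i}] \leq e^{ns^2/8}$, so for every $s>0$,
$$\bP\Bigl(\tfrac{1}{n}\sum_{i=1}^n Y_i \geq t\Bigr) \leq \exp\Bigl(-snt + \tfrac{ns^2}{8}\Bigr).$$
The remaining step is to optimize over $s$: the exponent $-snt + ns^2/8$ is minimized at $s = 4t$, which yields exponent $-2nt^2$ and hence the claimed bound $\exp(-2nt^2)$. The main obstacle in the whole argument is establishing Hoeffding's lemma cleanly, that is, the bound $\phi''(s)\leq (b-a)^2/4$; this is where the boundedness hypothesis is genuinely used, and one must verify that the range factor enters as exactly $(b-a)^2$ so that the final constant comes out to $2$ rather than something weaker. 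Everything else — Markov, independence, and the scalar optimization over $s$ — is routine.
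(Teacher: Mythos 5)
Your proof is correct: it is the standard Chernoff argument (Markov applied to $e^{s\sum Y_i}$, factorization by independence, Hoeffding's lemma via the convexity bound and $\phi''(s)\leq (b-a)^2/4$, then optimizing at $s=4t$ to get the constant $2$), and the lower tail indeed follows by applying the result to $1-X_i$. Note that the paper itself states this lemma without proof, as a classical result recalled in its appendix of ``Important Results,'' so there is no paper proof to compare against; your write-up fills that in with the textbook argument, and the only cosmetic point worth adding is the degenerate case $b_i=a_i$ (a constant $X_i$), where the division by $b-a$ in the convexity step is undefined but the bound $\bE[e^{sY_i}]=1\leq e^{s^2/8}$ holds trivially.
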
{}

\begin{lemma} (Beta Binomial trick) \label{Beta_Binomial trick}
Letting $F^{\emph{Beta}}_{\alpha,\beta}$ and $ F^{\emph{Bin}}_{n,p}$ respectively denote the cumulative distribution function of a Beta distribution with parameters $\alpha, \beta$, and of a Binomial distribution with parameters $(n,p)$. It holds that
$$ F^{\emph{Beta}}_{\alpha,\beta} (y) = 1 - F^{\emph{Bin}}_{\alpha + \beta -1, y} (\alpha - 1)$$
\end{lemma}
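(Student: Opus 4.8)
The plan is to prove the identity by a probabilistic argument based on order statistics, which is the cleanest route when $\alpha$ and $\beta$ are positive integers — precisely the case arising in the $\uts$ analysis, where the posterior parameters $S_k+1$ and $N_k-S_k+1$ are integers. Set $n = \alpha+\beta-1$ and let $U_1,\dots,U_n$ be i.i.d.\ uniform on $[0,1]$. The first ingredient I would recall is the standard fact that the $\alpha$-th order statistic $U_{(\alpha)}$ of such a sample follows a $\text{Beta}(\alpha,\, n-\alpha+1) = \text{Beta}(\alpha,\beta)$ distribution; this can be read off the joint density of the order statistics, or verified in one line by checking that its density equals $y^{\alpha-1}(1-y)^{\beta-1}/B(\alpha,\beta)$, where $B(\alpha,\beta) = \Gamma(\alpha)\Gamma(\beta)/\Gamma(\alpha+\beta)$.

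With this in hand, the heart of the argument is a single combinatorial equivalence of events. Fix $y \in [0,1]$. The $\alpha$-th smallest value among $U_1,\dots,U_n$ is at most $y$ if and only if at least $\alpha$ of the $U_i$ fall in $[0,y]$. Letting $X = \sum_{i=1}^n \ind(U_i \leq y)$, which is $\text{Bin}(n,y)$-distributed since each $U_i \leq y$ with probability $y$ independently, this equivalence reads $\{U_{(\alpha)} \leq y\} = \{X \geq \alpha\}$. Taking probabilities and translating both sides into CDFs yields
\[F^{\text{Beta}}_{\alpha,\beta}(y) = \bP(U_{(\alpha)} \leq y) = \bP(X \geq \alpha) = 1 - \bP(X \leq \alpha-1) = 1 - F^{\text{Bin}}_{\alpha+\beta-1,\,y}(\alpha-1),\]
which is exactly the claimed identity.

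As a self-contained alternative that avoids invoking any order-statistic fact, I would instead match derivatives. Writing $g(y) = 1 - F^{\text{Bin}}_{n,y}(\alpha-1) = 1 - \sum_{i=0}^{\alpha-1}\binom{n}{i}y^i(1-y)^{n-i}$ and differentiating, the Pascal-type identity $(i+1)\binom{n}{i+1} = (n-i)\binom{n}{i}$ makes consecutive terms cancel, so the sum telescopes and leaves only a boundary term: $g'(y) = \binom{n}{\alpha}\alpha\, y^{\alpha-1}(1-y)^{n-\alpha}$. Since $\binom{n}{\alpha}\alpha = n!/((\alpha-1)!(\beta-1)!) = 1/B(\alpha,\beta)$ and $n-\alpha = \beta-1$, this is precisely the $\text{Beta}(\alpha,\beta)$ density, i.e.\ $\tfrac{d}{dy}F^{\text{Beta}}_{\alpha,\beta}(y)$; and as $g(0) = 0 = F^{\text{Beta}}_{\alpha,\beta}(0)$, the two CDFs coincide on $[0,1]$. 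The only delicate point — and the natural obstacle in this second route — is the telescoping bookkeeping: tracking that the derivative collapses to $a_0 - a_\alpha$ with $a_0 = 0$, and confirming the normalising constant factors as above. The probabilistic route sidesteps this entirely. I would present the probabilistic proof as primary and note that, for non-integer parameters, the identity still holds but must be read through integration by parts on the incomplete beta function, which is not needed for the present application.
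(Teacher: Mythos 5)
The paper does not actually prove this lemma: it is recalled in Appendix~A as a known fact (the standard Beta--Binomial duality used throughout the Bernoulli Thompson Sampling literature, e.g.\ in the analyses of Agrawal and Goyal), so there is no in-paper argument to compare yours against. Your proof is correct and supplies exactly what the paper leaves implicit. The order-statistics route is the canonical one: with $n=\alpha+\beta-1$, the fact that $U_{(\alpha)}\sim\text{Beta}(\alpha,\beta)$ together with the event identity $\{U_{(\alpha)}\leq y\}=\{\sum_{i=1}^n \ind(U_i\leq y)\geq \alpha\}$ gives the statement in two lines, and it covers precisely the case needed in the paper, since the posteriors are $\text{Beta}(S_k+1,\,N_k-S_k+1)$ with integer parameters. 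Your derivative-matching alternative also checks out: the telescoping indeed leaves $g'(y)=\binom{n}{\alpha}\alpha\,y^{\alpha-1}(1-y)^{n-\alpha}$, and $\binom{n}{\alpha}\alpha=n!/((\alpha-1)!(\beta-1)!)=1/B(\alpha,\beta)$ with $g(0)=0$, so the CDFs coincide.

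One small correction to your closing remark: it is not quite right to say the identity ``still holds'' for non-integer parameters, because $F^{\text{Bin}}_{\alpha+\beta-1,\,y}(\alpha-1)$ is then not the CDF of any Binomial distribution --- the right-hand side simply has no literal meaning. The honest statement is that both sides are specializations of the regularized incomplete beta function $I_y(\alpha,\beta)$, via $1-F^{\text{Bin}}_{n,p}(k)=I_p(k+1,n-k)$ for integer $n,k$, and that the generalization lives at the level of $I_y$, not of the displayed identity. Since the paper only ever invokes the lemma with integer parameters, this does not affect the validity of your proof for its intended use.
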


\section{Proof of Lemma~\ref{R_1_bound}} \label{appendix:R_1_bound}

In this section, we adapt the analysis of \cite{thompson_samplingAG}, highlighting the steps that need extra justification.

Let $k$ be a sub-optimal arm. We introduce two thresholds $x_k$ and $y_k$ such that $\mu_k < x_k < y_k < \mu_{k_\star}$, that we specify later. We define the following ``good" events: $E_k^\mu$ (t) = \{$\hat{\mu}_k(t) \leq x_k$\} and $E_k^\theta$ (t) = \{$\theta_k(t) \leq y_k$\}.
The event $\{K(t) = k, L(t) = k_\star\}$ can be decomposed as follows:
\begin{align}
    \{K(t) = k, L(t) = k_\star\} &= \{K(t) = k, L(t) = k_\star, E_k^\mu(t), E_k^\theta(t)\} \label{R_1_UTS1}\\ 
                            &\cup \{K(t) = k, L(t) = k_\star, E_k^\mu(t), \overline{E_k^\theta(t)}\} \label{R_1_UTS2} \\
                            &\cup \{K(t) = k, L(t) = k_\star, \overline{E_k^\mu(t)}\}  \label{R_1_UTS3}
\end{align}
Observe that for $k\notin \cN(k_\star)$, by definition of the algorithm, $\{K(t) = k, L(t) = k_\star\} =\emptyset$. For $k \in \cN(k_\star)$, we now upper bound the probability of the three events in the decomposition.

\paragraph{Upper Bound on the Probability of (\ref{R_1_UTS1})} We prove the following lemma.

\begin{lemma} \label{TS_1} For all $k \in \cN(k_\star)$, there exists a constant $\Bar{C}_1(\mu_{k_\star}, y_k)$ such that
$$\sum_{t=1}^T \mathbb{P}\left(K(t) = k, L(t) = k_\star, E_k^\mu(t), E_k^\theta(t)\right)
\leq \Bar{C}_1(\mu_{k_\star}, y_k)$$
\end{lemma}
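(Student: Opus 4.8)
The plan is to follow the classical ``good event'' argument for Thompson Sampling due to \cite{thompson_samplingAG}, adapting the conditioning to the leader-restricted sampling of $\uts$. First I would note that the claim is vacuous unless $k \in \cN(k_\star)$: on $\{L(t) = k_\star\}$ the algorithm only samples arms in $\cN^+(k_\star)$, so $\{K(t)=k,\, L(t)=k_\star\} = \emptyset$ whenever $k \notin \cN(k_\star)$. I would also drop the constraint $E_k^\mu(t)$ at the outset (bounding the probability from above by the one without it), since for this particular term only the control on the posterior sample $\theta_k(t)$ is needed.

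The core step is a coupling inequality. Writing $p_{k_\star,t} = \bP(\theta_{k_\star}(t) > y_k \mid \cF_{t-1})$, which is $\cF_{t-1}$-measurable because the posterior parameters $(S_{k_\star},N_{k_\star})$ at the start of round $t$ depend only on observations through round $t-1$, I would establish
$$\bP\bigl(K(t)=k,\,L(t)=k_\star,\,E_k^\theta(t) \mid \cF_{t-1}\bigr) \;\leq\; \frac{1-p_{k_\star,t}}{p_{k_\star,t}}\,\bP\bigl(K(t)=k_\star,\,L(t)=k_\star \mid \cF_{t-1}\bigr).$$
The argument conditions on $\{L(t)=k_\star\}$ and on all the posterior draws $\{\theta_j(t) : j \in \cN^+(k_\star),\, j \neq k_\star\}$. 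On the event $A$ that $k$ is the largest of these draws and $\theta_k(t) \leq y_k$, the final choice between $k$ and $k_\star$ is decided solely by whether $\theta_{k_\star}(t)$ falls below or above $\theta_k(t) \leq y_k$. Comparing $\bP(\theta_{k_\star}(t) < \theta_k(t) \mid A) \leq 1 - p_{k_\star,t}$ against $\bP(\theta_{k_\star}(t) > \theta_k(t) \mid A) \geq p_{k_\star,t}$ (both because $\theta_k(t) \leq y_k$ on $A$), and then integrating over $A$, yields the displayed bound.

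Next I would sum over $t$, relax $\ind(L(t)=k_\star)$ to $1$, and reindex by the number of pulls of $k_\star$: since $N_{k_\star}$ increases by exactly one each time $k_\star$ is selected, $\sum_{t} \frac{1-p_{k_\star,t}}{p_{k_\star,t}}\,\ind(K(t)=k_\star) \leq \sum_{s \geq 0} \frac{1-p_s}{p_s}$, where $p_s$ is the probability that a $\mathrm{Beta}$ posterior sample formed from $s$ i.i.d.\ $\cB(\mu_{k_\star})$ observations exceeds $y_k$. It then remains to show $\sum_{s\geq 0}\bE[\tfrac{1}{p_s}-1] < \infty$: using the Beta-Binomial trick (Lemma~\ref{Beta_Binomial trick}) to rewrite $p_s$ as a Binomial tail $F^{\mathrm{Bin}}_{s+1,y_k}(S_s)$ and a Chernoff/Hoeffding estimate (Lemma~\ref{hoeffding}), the concentration of $S_s/s$ around $\mu_{k_\star} > y_k$ forces $1-p_s$ to decay geometrically in $s$, yielding a finite sum depending only on $\mu_{k_\star}$ and $y_k$, which is the constant $\bar C_1(\mu_{k_\star},y_k)$.

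The main subtlety, and the place where $\uts$ departs from vanilla TS, is that $k_\star$ may be sampled even on rounds when it is not the leader (because it can lie in the neighbourhood of another leader), so a naive reindexing ``by rounds where $L(t)=k_\star$'' would miss observations. I expect this to be the delicate point, but it is precisely resolved by the relaxation $\ind(L(t)=k_\star) \leq 1$ above: dropping the leader constraint only increases the right-hand side, and reindexing by the total pull count $N_{k_\star}$ then captures every observation of $k_\star$, so the concentration argument goes through unchanged. The remaining care is bookkeeping — confirming the $\cF_{t-1}$-measurability of $p_{k_\star,t}$ and checking that each value of $s$ contributes exactly one term under the reindexing.
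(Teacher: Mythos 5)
Your proposal matches the paper's own proof in all essentials: both establish the Agrawal--Goyal coupling inequality $\bP\left(K(t)=k,\,L(t)=k_\star,\,E_k^\theta(t)\mid\cF_{t-1}\right)\le \frac{1-p_{kt}}{p_{kt}}\,\bP\left(K(t)=k_\star,\,L(t)=k_\star\mid\cF_{t-1}\right)$ via the $\cF_{t-1}$-measurability of $\{L(t)=k_\star\}$ (and of $E_k^\mu(t)$, which the paper carries along while you discard it harmlessly), then drop the leader indicator so that reindexing by the total pull count of $k_\star$ reduces everything to the Agrawal--Goyal summability result $\sum_{s\ge 0}\bE\left[1/p_s-1\right]<\infty$ --- which is exactly how the paper resolves the UTS-specific issue that $k_\star$ may also be sampled on rounds where it is not the leader. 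One minor caution: that last summability is not simply a Hoeffding consequence of geometric decay of $1-p_s$, since one must control $\bE[1/p_s]$ on rare histories where $S_s$ is atypically small (this is where $y_k<\mu_{k_\star}$ is essential); but since you, like the paper, defer this step to the proof of Theorem 1 of Agrawal and Goyal, this does not constitute a gap.
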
{}
\begin{proof}
We first prove the following inequality
\begin{small}\begin{equation}\label{TS_lemma_1}
    \mathbb{P}\left(K(t) = k, L(t) = k_\star, E_k^\mu(t), E_k^\theta(t)|\mathcal{F}_{t-1}\right)
\leq \frac{1-p_{kt}}{p_{kt}} \mathbb{P}\left(K(t) = k_\star, L(t) = k_\star, E_k^\mu(t), E_k^\theta(t)|\mathcal{F}_{t-1}\right)
\end{equation}\end{small}
\hspace{-0.3cm} where $p_{kt} = \mathbb{P} (\theta_1(t)>y_k|\cF_{t-1}) = \mathbb{P} (\overline{E_k^\theta(t)}|\cF_{t-1})$. To do so, notice that $E_k^\mu$ (t) and $\{L(t) = k_\star\}$ are $\mathcal{F}_{t-1}$-measurable, since $\hat{\mu}_k(t)$ is completely determined by the rewards and arms drawn up to time $t-1$. Therefore, one can assume that $\cF_{t-1}$ is such that $E_k^\mu$ (t) and $\{L(t) = k_\star\}$ hold, and it suffices to show that 
$$ \bP(K(t)=k, E_k^\theta (t) | \cF_{t-1}) \leq \frac{1-p_{kt}}{p_{kt}} \mathbb{P}(K(t) = k_\star,  E_k^\theta(t)|\mathcal{F}_{t-1})$$
which can be proved as in \cite{thompson_samplingAG}. With \eqref{TS_lemma_1}, we get
\begin{align*}
&\sum_{t=1}^T \mathbb{P}(K(t) = k, L(t) = k_\star, E_k^\mu(t), E_k^\theta(t)) \\
&= \sum_{t=1}^T \bE \left[ \mathbb{P}(K(t) = k, L(t) = k_\star, E_k^\mu(t), E_k^\theta(t) | \cF_{t-1}) \right]\\
&\leq \sum_{t=1}^T \bE \left[ \bE \left[ \frac{1-p_{kt}}{p_{kt}} \ind (K(t) = k_\star, L(t) = k_\star, E_k^\mu(t), E_k^\theta(t)|\mathcal{F}_{t-1} \right] \right]\\
&\leq \sum_{t=1}^T \bE \left[ \frac{1-p_{kt}}{p_{kt}} \ind (K(t) = k_\star, L(t) = k_\star, E_k^\mu(t), E_k^\theta(t) ) \right]\\
&\leq \sum_{t=1}^T \bE \left[ \frac{1-p_{kt}}{p_{kt}} \ind (K(t) = k_\star, E_k^\mu(t), E_k^\theta(t) ) \right]
\end{align*}
which allows to continue with the same proof as Theorem 1 in \cite{thompson_samplingAG}.
\end{proof}

\paragraph{Upper Bound on the Probability of (\ref{R_1_UTS2})} We prove the following lemma.
\begin{lemma} For all $k \in \cN(k_\star)$, letting $L_k(T) = \frac{\ln T}{\kl(x_k,y_k)}$, it holds that
$$ \sum_{t=1}^T \bP \left(K(t)=k, L(t) = k_\star, \overline{E_k^\theta(t)}, E_k^\mu(t) \right) \leq L_k(T) + 1.$$
\end{lemma}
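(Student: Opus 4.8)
The plan is to follow the Thompson Sampling analysis of \cite{thompson_samplingAG}, isolating the event in which the empirical mean of $k$ is well controlled ($E_k^\mu$) while its posterior sample is anomalously large ($\overline{E_k^\theta}$); this is the event carrying the leading $\ln T$ term. First I would discard the leader constraint, bounding the quantity of interest by $\sum_{t=1}^T \bP(K(t)=k, E_k^\mu(t), \overline{E_k^\theta(t)})$, since intersecting with $\{L(t)=k_\star\}$ can only shrink the event. This reduction is precisely what neutralises the UTS-specific difficulty for this term: regardless of who the leader is, the only structural facts we will need are that $\hat{\mu}_k(t)$ is $\cF_{t-1}$-measurable while $\theta_k(t)$ is a fresh draw from the current posterior of $k$.

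Next I would reindex the sum by the number of pulls of $k$. Since $\{K(t)=k\}$ increments $N_k$, for each value $j-1$ of $N_k(t-1)$ there is at most one round $t$ at which the $j$-th pull of $k$ occurs, and at that round $\hat{\mu}_k(t)=\hat{\mu}_{k,j-1}$ depends only on the first $j-1$ rewards collected from $k$. This rewrites the sum as $\sum_{j\geq 1}\bP\big(\text{the }j\text{-th pull of }k\text{ occurs by }T,\ \hat{\mu}_{k,j-1}\leq x_k,\ \theta_k>y_k\big)$.

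Then I would split this sum at $L_k(T)=\ln T/\kl(x_k,y_k)$. The first $\lfloor L_k(T)\rfloor$ pulls contribute at most $L_k(T)$, bounding each probability by $1$. For $j-1\geq L_k(T)$ I use that, conditionally on the first $j-1$ rewards of $k$ with $\hat{\mu}_{k,j-1}\leq x_k$ (equivalently $S_k\leq (j-1)x_k$), the posterior $\text{Beta}(S_k+1,(j-1)-S_k+1)$ places little mass above $y_k$. Concretely, the Beta-Binomial identity of Lemma~\ref{Beta_Binomial trick} rewrites $\bP(\theta_k>y_k)$ as the binomial lower tail $F^{\text{Bin}}_{j,y_k}(S_k)$, and since $\tfrac{(j-1)x_k}{j}<x_k<y_k$, the monotonicity of $q\mapsto\kl(q,y_k)$ on $[0,y_k]$ together with a Chernoff bound yields $F^{\text{Bin}}_{j,y_k}(S_k)\leq F^{\text{Bin}}_{j,y_k}((j-1)x_k)\leq e^{-(j-1)\kl(x_k,y_k)}\leq e^{-L_k(T)\kl(x_k,y_k)}=1/T$. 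As there are at most $T$ such terms, they contribute at most $1$, so the two parts combine to the announced $L_k(T)+1$.

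The step requiring the most care is the conditional bound at the pull time. In vanilla Thompson Sampling $\theta_k$ is resampled every round, and here in UTS it is resampled at every round in which $k$ lies in the neighbourhood of the current leader, so the time of the $j$-th pull is not predictable from $\cF_{t-1}$. The argument of \cite{thompson_samplingAG} must therefore be invoked to guarantee that, once the first $j-1$ observations of $k$ are fixed, the draw responsible for the pull is genuinely a fresh sample from the posterior after $j-1$ observations, so that the per-pull probability is controlled by $\bP(\theta_k>y_k)$ under that posterior rather than by an uncontrolled supremum over the intervening draws. The remaining manipulations, namely the Chernoff estimate and the counting of the tail terms, are then routine.
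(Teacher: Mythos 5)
Your overall architecture (drop the leader event, split the sum at $L_k(T)$, then use the Beta-Binomial trick and a Chernoff bound) matches the paper's, and your final bound is correct, but your treatment of the tail of the sum has a genuine flaw: the per-pull claim ``each term with $j-1\ge L_k(T)$ is at most $1/T$'' is false, and the justification you give for it --- that ``the draw responsible for the pull is genuinely a fresh sample from the posterior after $j-1$ observations'' --- is exactly what fails. Conditionally on $\cF_{t-1}$, the sample $\theta_k(t)$ is indeed a fresh posterior draw; but the round $\tau_j$ of the $j$-th pull is selected \emph{because} $\theta_k$ won the comparison against the other arms' samples, so $\theta_k(\tau_j)$ is size-biased upward. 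This is not a technicality in the present setting: the thresholds are chosen with $y_k$ close to $\mu_{k_\star}$, and when $k_\star$ is the leader its samples concentrate just above $y_k$, so arm $k$ is pulled (outside forced exploration) only when $\theta_k>\theta_{k_\star}$, which already implies $\theta_k(\tau_j)>y_k$ with probability near one conditionally on the pull occurring. What is actually true is $\bP\bigl(\tau_j\le T,\ \theta_k(\tau_j)>y_k,\ \hat{\mu}_{k,j-1}\le x_k\bigr)\le \frac{1}{T}\,\bE[W_j]$, where $W_j$ is the number of rounds $t\le T$ at which arm $k$ has received exactly $j-1$ pulls so far; an individual term can be of order one, and only the \emph{sum} over $j$ is at most $1$, because $\sum_j W_j\le T$ almost surely.

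The repair is to extract the factor $1/T$ per \emph{round}, not per pull, which is what the paper does (following the steps of Lemma 4 of \cite{thompson_samplingAG}): keep the time-indexed sum and split on $\{N_k(t)\le L_k(T)\}$ versus $\{N_k(t)>L_k(T)\}$. The first part is at most $L_k(T)$ by your counting argument. For the second part, condition on $\cF_{t-1}$, with respect to which $N_k(t)$ and $\hat{\mu}_k(t)$ (hence $E_k^\mu(t)$) are measurable while $\theta_k(t)$ is a genuinely fresh posterior sample, and drop the event $\{K(t)=k\}$ \emph{after} conditioning:
$$\bP\bigl(K(t)=k,\ \overline{E_k^\theta(t)},\ E_k^\mu(t),\ N_k(t)>L_k(T)\ \big|\ \cF_{t-1}\bigr)\le \ind\bigl(E_k^\mu(t),\,N_k(t)>L_k(T)\bigr)\,\bP\bigl(\theta_k(t)>y_k\ \big|\ \cF_{t-1}\bigr)\le \frac{1}{T},$$
where the last step is the Beta-Binomial plus Chernoff computation you already carried out; summing over the $T$ rounds yields the $+1$. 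Equivalently, your pull-indexed sum can be salvaged by noting that $\sum_{j>L_k(T)}\bP(\tau_j\le T,\dots)=\sum_{t=1}^T\bP\bigl(K(t)=k,\ N_k(t)>L_k(T),\dots\bigr)$, since each round is the $j$-th pull for exactly one $j$, and then applying the display above; but as written, the step ``at most $T$ terms, each at most $1/T$'' does not hold term by term.
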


\begin{proof} We start by the following decomposition:
\begin{align*}
\sum_{t=1}^T & \ \bP \left(K(t)=k, L(t) = k_\star, \overline{E_k^\theta(t)}, E_k^\mu(t) \right) \\
&\leq \bE \left[ \sum_{t=1}^T \ind \left(K(t)=k, L(t) = k_\star, N_k(t) \leq L_k(T), \overline{E_k^\theta(t)}, E_k^\mu(t) \right)\right]\\
&+ \bE \left[ \sum_{t=1}^T \ind \left(K(t)=k, L(t) = k_\star, N_k(t) > L_k(T), \overline{E_k^\theta(t)}, E_k^\mu(t) \right) \right]
\end{align*}
The first term of the sum is clearly bounded by $L_k(T)$.
As for the second term, we can directly upper bound it as follows
\begin{align*}
&\bE \left[ \sum_{t=1}^T \ind \left(K(t)=k, L(t) = k_\star, N_k(t) > L_k(T), \overline{E_k^\theta(t)}, E_k^\mu(t) \right) \right]\\
&\leq \bE \left[ \sum_{t=1}^T \ind \left(K(t)=k, N_k(t) > L_k(T), \overline{E_k^\theta(t)}, E_k^\mu(t) \right) \right]
\end{align*}
and the conclusion follows from the same steps used in the proof of Lemma 4 of \cite{thompson_samplingAG}.
\end{proof}

\paragraph{Upper Bound on the Probability of (\ref{R_1_UTS3})} We prove the following lemma.

\begin{lemma}\label{TS_3}
For $k \in \cN(k_\star)$,
$$ \sum_{t=1}^T \bP \left(K(t)=k, L(t)=k_\star, \overline{E_k^\mu(t)} \right) \leq \frac{1}{\kl(x_k,\mu_k)} + 1 $$ 
\end{lemma}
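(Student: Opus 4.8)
The plan is to discard the constraint on the leader and reduce the time-indexed sum to a sum over the pulls of arm $k$. Since $\{K(t)=k, L(t)=k_\star, \overline{E_k^\mu(t)}\} \subseteq \{K(t)=k, \hat{\mu}_k(t) > x_k\}$, I would first bound
\[\sum_{t=1}^T \bP\left(K(t)=k, L(t)=k_\star, \overline{E_k^\mu(t)}\right) \leq \bE\left[\sum_{t=1}^T \ind\left(K(t)=k, \hat{\mu}_k(t) > x_k\right)\right].\]
A term on the right is counted only when arm $k$ is actually drawn, and at such a time $\hat{\mu}_k(t)$ is the empirical mean $\hat{\mu}_{k,s}$ of the $s$ i.i.d.\ $\cB(\mu_k)$ rewards already gathered from $k$ (the notation of the bound on~\eqref{SecondPart}). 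Because $N_k(t)$ strictly increases at each pull of $k$, every value of $s$ is realised at a pull of $k$ at most once, so the sum over times injects into a sum over the deterministic pull index $s$.

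Next I would isolate the first pull, for which the pre-pull empirical mean carries no information and whose indicator is bounded by $1$, and handle the subsequent pulls through a one-sided deviation inequality:
\[\bE\left[\sum_{t=1}^T \ind\left(K(t)=k, \hat{\mu}_k(t) > x_k\right)\right] \leq 1 + \sum_{s=1}^\infty \bP\left(\hat{\mu}_{k,s} > x_k\right).\]
Since $x_k > \mu_k$, the Chernoff (method-of-types) bound for sums of Bernoulli variables gives $\bP(\hat{\mu}_{k,s} \geq x_k) \leq \exp\left(-s\,\kl(x_k,\mu_k)\right)$, and summing the resulting geometric series,
\[\sum_{s=1}^\infty \exp\left(-s\,\kl(x_k,\mu_k)\right) = \frac{1}{\exp(\kl(x_k,\mu_k)) - 1} \leq \frac{1}{\kl(x_k,\mu_k)},\]
where the last step uses $e^a - 1 \geq a$ for $a>0$. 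Combining the two displays yields the claimed bound $\tfrac{1}{\kl(x_k,\mu_k)} + 1$.

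The delicate step is the reindexing, where the number of observations of $k$ entering $\hat{\mu}_k(t)$ is itself random: one passes from the time sum to a clean sum over the deterministic index $s$ by coupling the rewards of arm $k$ to a fixed i.i.d.\ $\cB(\mu_k)$ stream whose $s$-sample prefix average is $\hat{\mu}_{k,s}$, independent of the data-dependent order in which $k$ happens to be selected. This is exactly the device already used to reach $\hat{\mu}_{k_2,u}$ in the bound on~\eqref{SecondPart}. Two further points deserve care: unlike the UCB-type arguments, no confidence width is needed here, because $x_k$ is a fixed threshold strictly above $\mu_k$, so a single one-sided deviation bound suffices; and the sharp constant $\tfrac{1}{\kl(x_k,\mu_k)}$ forces the use of the KL-form Chernoff bound rather than the looser Hoeffding inequality of Lemma~\ref{hoeffding}. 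The remaining manipulations are routine.
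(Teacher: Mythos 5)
Your proof is correct and follows essentially the same route as the paper: the paper also begins by dropping the event $\{L(t)=k_\star\}$ to reduce to $\sum_t \bP(K(t)=k,\overline{E_k^\mu(t)})$, and then invokes the proof of Lemma 3 of \cite{thompson_samplingAG}, which is precisely the argument you spell out (reindexing the sum by the pull count of arm $k$, applying the KL-form Chernoff bound $\bP(\hat{\mu}_{k,s}\geq x_k)\leq e^{-s\,\kl(x_k,\mu_k)}$, and summing the geometric series). Your explicit treatment of the reindexing via the i.i.d.\ reward stream is exactly the device the cited lemma relies on, so there is no gap.
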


\begin{proof} To prove this lemma, one can write
\begin{align*}
    \sum_{t=1}^T \bP(K(t)=k, L(t)=k_\star, \overline{E_k^\mu(t)})
    &= \bE \left[\sum_{t=1}^T \ind ( K(t)=k, L(t)=k_\star, \overline{E_k^\mu(t)}  )\right]\\
    &\leq \bE \left[\sum_{t=1}^T \ind ( K(t)=k, \overline{E_k^\mu(t)}  )\right]\\
    \end{align*}
and use the same steps as in the proof of Lemma 3 of \cite{thompson_samplingAG}.
\end{proof}    
    
\paragraph{Conclusion}

For $0<\epsilon \leq 1$, we can choose $x_k$ and $y_k$ in $(\mu_k, \mu_{k_\star})$ such that 
$\kl(x_k,y_k) =\frac{\kl(\mu_k,\mu_{k_\star})}{(1+\epsilon)}$. Using the three above lemmas yields, for all $k \in \cN(k_\star)$:
\begin{align*}
 \bE \left[ \sum_{t=1}^T \ind (K(t)=k, L(t)=k_\star) \right] 
 &\leq  (1+\epsilon) \frac{\Delta_k}{\kl(\mu_k,\mu_\star)}\ln(T)+\Tilde{C}(\bm\mu,\epsilon).
\end{align*}
Since when the leader is $k_\star$, $\ind (K(t)=k, L(t)=k_\star) = 0$ for all $k \notin \cN^+(k_\star)$, we only need to sum over the arms $k \in \cN(k_\star)$ to get the result of Lemma \ref{R_1_bound}.

\newpage

\section{Proof of Lemma~\ref{lem:TermeRelou}}\label{appendix:TermeRelou}

Let $k \in [K] \setminus\{k_\star\}$.
\paragraph{Notation}
Recall from Section~\ref{sec:proof_uts} that $\cB_{\cN(k)} = \text{argmax}_{\ell\in \cN(k)} \mu_\ell$ is the set of best arms in the neighborhood of $k$. This set is such that $1 \leq |\cB_{\cN(k)}| \leq \tilde{B} $, and arms belonging to $\cB_{\cN(k)}$ have same mean, that we denote $\mu_{k_2} =  \max_{\ell\in \cN(k)} \mu_\ell$. We also define $N_{\cB_{\cN(k)}}(t) = \sum_{k_2 \in \cB_{\cN(k)}} N_{k_2}(t)$, the number of times arms belonging to $\cB_{\cN(k)}$ have been drawn up to time $t$. We will say that $k' \in \cN^+(k)$ is sub-optimal if $\mu_{k'} < \mu_{k_2}$. We denote by $\Tilde{M}_k = |\cN^+(k) \setminus \cB_{\cN(k)}| \leq |\cN(k)|$, the number of sub-optimal arms belonging to $\cN(k)$.

We introduce, for every arm $k'$,
\[\delta_{k'} = \frac{\mu_{k_2} - \mu_{k'}}{2}, \ \ \ \text{and let} \ \ \ \delta = \underset{k' \in \cN^+(k) \setminus \cB_{\cN(k)}}{\text{min}} \delta_{k'} \ \ \ \text{and} \ \ \ \ C := \frac{6}{\delta^2}.\]
We denote by $\Tilde{k}$ any arm satisfying $\delta_{\Tilde{k}} = \delta$.

Just like in Section~\ref{sec:analysis}, we introduce the consecutive instants in which arm $k$ is the leader, $\tau_i^k$. Assuming that $\uts$($\gamma$) would be played forever, the instant of the $i$-th time arm $k$ is the leader, $\tau_i^k$, can be formally written as such
\[\tau_{i}^k = \inf \{ t \in \N : L(t) = k, \ell_k(t) = i \},\]
with the convention that $\inf \emptyset = +\infty$.

\medskip

For all $i  \in \{1,\dots,T\}$, for all $b\in (0,1)$, by definition of $\tau_i^k$, it holds that
\[\sum_{t=1}^T\ind\left(L(t) = k, \ell_k(t) = i, \forall k_2 \in \cB_{\cN(k)}, N_{k_2}(t) \leq \left(\ell_k(t)\right)^b\right) =\ind \left(\forall k_2 \in \cB_{\cN(k)}, N_{k_2}(\tau_{i}^k) \leq i^b\right)\ind\left(\tau_i^k \leq T\right),\]
which permits to rewrite
\begin{align}
\sum_{t=1}^T \bP\left(L(t) = k, \forall k_2 \in \cB_{\cN(k)}, N_{k_2}(t) \leq (\ell_k(t))^b\right)
&= \sum_{i=1}^T \bP\left(\forall k_2 \in \cB_{\cN(k)}, N_{k_2}\left(\tau_{i}^k\right) \leq i^b, \tau_i^k \leq T\right) \nonumber \\
&\leq \sum_{i=1}^T \bP\left(N_{\cB_{\cN(k)}}\left(\tau_{i}^k\right) \leq \tilde{B} i^b, \tau_i^k \leq T\right), \label{ToControl}
\end{align}
where we recall that $N_{\cB_{\cN(k)}}(t)$ is the total number of pulls of all arms in $B_{\cN}(k)$.

We now provide an upper bound on \eqref{ToControl}, for a well chosen value of $b$.

Our analysis bears similarity with that of \cite{ALT12}: we use the fact that if arms belonging to $ \cB_{\cN(k)}$ are not drawn much at time $\tau_i^k$, there must exist many consecutive instants $\tau_{\ell}^k < \tau_i^k$ in which those arms are not selected at all. To formalize this idea, we introduce for every pair $i,j$ the first instant preceeding $\tau_i^k$ in which arms of $ \cB_{\cN(k)}$ have been played at least $j$ times while arm $k$ is the leader:
\[\nu_{i,j} = \inf\{ \ell \leq i : N_{\cB_{\cN(k)}}(\tau_\ell^k) \geq j\},\]
with the convention $\inf \emptyset = i+1$. It holds that
\[\left(N_{\cB_{\cN(k)}}\left(\tau_{i}^k\right) \leq \tilde{B} i^b\right) = \left(\nu_{i,\lceil \tilde{B} i^b\rceil} = i + 1\right)\subseteq \bigcup_{j=0}^{\lfloor \tilde{B} i^b\rfloor} \left(\nu_{i,j+1} - \nu_{i,j} \geq \frac{i^{1-b}}{\tilde{B}}-1\right). \]

We now introduce $\cI_{i,j} \subseteq \left(\nu_{i,j} , \nu_{i,j} + \lceil \frac{i^{1-b}}{\tilde{B}} - 2 \rceil \right]$, the subset of instants belonging to $\left(\nu_{i,j} , \nu_{i,j} + \lceil \frac{i^{1-b}}{\tilde{B}} - 2 \rceil \right]$ where no leader exploration is performed. The $j$-th event in the union implies that no arm belonging to $ \cB_{\cN(k)}$ is selected in any instant $\tau_\ell^k$ for $\ell \in \cI_{i,j}$. More precisely, introducing
$$\cE_{i,j} =  \left\{\cI_{i,j} \subseteq [i]\right\}\\
\bigcap\left\{\forall \ell \in \cI_{i,j}, K(\tau_\ell^k) \notin \cB_{\cN(k)} \right\}$$
one has \begin{equation}\label{Useful}\bP\left(N_{\cB_{\cN(k)}}\left(\tau_i^k\right) \leq i^b, \tau_i^k \leq T\right) \leq \sum_{j=0}^{\lfloor \tilde{B}i^b\rfloor} \bP\left(\cE_{i,j}, \tau_i^k \leq T\right). \end{equation}

\paragraph{Interval sub-division and saturated arms} To further upper bound \eqref{Useful}, we introduce for $m = 1, \dots, \tilde{M}_k + 1$, the intervals $\cI_{i,j,m}$:
$$\cI_{i,j,m} := \left(\nu_{i,j} + (m-1) \Bigl\lfloor \frac{i^{1-b}/\tilde{B} - 2}{\tilde{M}_k + 1} \Bigr\rfloor, \nu_{i,j} + m \Bigl\lfloor\frac{i^{1-b}/\tilde{B} - 2}{\tilde{M}_k + 1} \Bigr\rfloor \right] \cap \cI_{i,j}, $$
whose length is lower bounded as follows, substracting the instant in which leader exploration is performed (that are not included in $\cI_{i,j}$): \[|\cI_{i,j,m}| = \Bigl\lfloor \frac{i^{1-b}/\tilde{B} - 2}{\tilde{M}_k + 1} \Bigr\rfloor - \Bigl\lceil \frac{1}{\gamma} \left(\frac{i^{1-b}/\tilde{B} - 2}{\tilde{M}_k + 1}\right) \Bigr\rceil  \geq \Bigl\lfloor \left(1-\frac{1}{\gamma}\right) \left(\frac{i^{1-b}/\tilde{B} - 2}{\tilde{M}_k + 1}\right) - 2 \Bigr\rfloor :=\tilde{H}_{i,b,k,\gamma}.\]

As in \cite{ALT12}, we introduce the notion of saturated sub-optimal arm: we say an arm $k'\notin \cB_{\cN(k)}$ is saturated at $\ell$ if $N_{k'}(\tau_\ell^k) > C \ln(i)$. Otherwise, it is unsaturated. For an interval $\cI_{i,j,m}$, we denote by $n_{i,j,m}$ the number of interruptions, that is, the number of times we draw an unsaturated arm during $\cI_{i,j,m}$. We introduce $F_{i,j,m}$, the event that by the end of $\cI_{i,j,m}$ , at least $m$ sub-optimal arms are saturated, and $\cS_{i,j,m}$, the set of saturated arms at the end of $\cI_{i,j,m}$.


We decompose the probability of the event $\{\cE_{i,j}, \tau_i^k \leq T \}$ as follows

\begin{align}
\mathbb{P} [\cE_{i,j}, \tau_i^k \leq T] &\leq \mathbb{P} [\cE_{i,j}, F_{i,j,\tilde{M}_k}, \tau_i^k \leq T] \label{step_2}\\
&+ \mathbb{P} [\cE_{i,j}, F_{i,j,\tilde{M}_k}^c,\tau_i^k \leq T] \label{step_3}
\end{align}

We will prove below that
\begin{equation} \label{step_2_bound}
     \eqref{step_2} \leq \frac{2\tilde{M}_k}{i^2(1-\exp(- \delta^2/2))} + g_1(\bm\mu, j, b, i, k, \gamma)
\end{equation}
and that for $i$ larger than some constant $N_{ \bm\mu,b}$,
\begin{equation} \label{step_3_bound}
     \eqref{step_3} \leq (\tilde{M}_k -1) \left( \frac{2\tilde{M}_k}{i^2(1-\exp(- \delta^2/2))} + g_2(\bm\mu, j, b, i, k, \gamma) \right)
\end{equation}
where for a well-chosen $b\in (0,1)$ and $\gamma \geq 2 $
 \[\sum_{i=1}^\infty\sum_{j=0}^{\lfloor \tilde{B} i^b \rfloor} g_1(\bm\mu, j, b, i, k, \gamma) < \infty  \ \text{ and } \ \sum_{i=1}^{\infty}\sum_{j=0}^{\lfloor \tilde{B} i^b \rfloor} g_2(\bm\mu, j, b, i, k, \gamma) < \infty.\]
Combining \eqref{Useful} with the upper bounds \eqref{step_2_bound} and \eqref{step_3_bound}, we get
\begin{align*}
\eqref{ToControl}& \leq M_{\bm\mu,b} + \sum_{i=N_{\bm\mu,b}+1}^T \sum_{j=0}^{\lfloor \tilde{B}i^b \rfloor} \bP [\cE_{i,j}, \tau_i^k \leq T] \\
&\leq M_{\bm\mu,b} + \sum_{i=1}^T \sum_{j=0}^{\lfloor \tilde{B}i^b \rfloor} \left[ \frac{2\tilde{M}_k}{i^2(1-\exp(- \delta^2/2))} + g_1(\bm\mu, j, b, i,k, \gamma) \right]\\
&+ \sum_{i=1}^T \sum_{j=0}^{\lfloor \tilde{B}i^b \rfloor} \left[ (\tilde{M}_k-1) \left(\frac{2\tilde{M}_k}{i^2(1-\exp(- \delta^2/2))} + g_2(\bm\mu, j, b, i,k, \gamma)\right) \right] \\
&\leq M_{\bm\mu,b} + \sum_{i=1}^\infty \frac{2\tilde{B}\tilde{M}_k^2}{i^{2-b}(1-\exp(- \delta^2/2))} + \sum_{i=1}^\infty \sum_{j=0}^{\lfloor \tilde{B} i^b \rfloor} \left[g_1(\bm\mu, j, b, i,k,\gamma) + g_2(\bm\mu, j, b, i,k,\gamma)\right] \\
&:= D_k(\bm\mu,b,\gamma),
\end{align*}{}
which concludes the proof. We now prove the two crucial upper bounds \eqref{step_2_bound} and \eqref{step_3_bound}.

\paragraph{Main ingredients} We introduce two useful lemmas whose proofs are postponed to the end of this appendix.
Lemma \ref{lemma4} establishes that it is unlikely that the Thompson sample associated to some saturated arm exceeds its true mean by too much.
\begin{lemma} \label{lemma4}
Let $k \in [K]$.
$$\mathbb{P} \left(\exists \ell \leq i , \exists k' \notin \cB_{\cN(k)}, \theta_{k'}(\tau_\ell^k) > \mu_{k'} + \delta, N_{k'}( \tau_\ell^k) > C \ln(i),\tau_{i}^k < T\right) \leq \frac{2\tilde{M}_k}{i^2(1-\exp(- \delta^2/2))}$$
\end{lemma}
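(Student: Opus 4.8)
The statement to prove is Lemma~\ref{lemma4}, which bounds the probability that some saturated suboptimal arm produces a Thompson sample exceeding its true mean by more than $\delta$. I will explain the plan.

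\textbf{Overall approach.} The plan is to apply a union bound over the suboptimal arms $k' \notin \cB_{\cN(k)}$ and over the relevant range of sample sizes, then control each term by a concentration argument for the Beta posterior. There are at most $\tilde{M}_k$ such suboptimal arms in $\cN^+(k)$, which explains the $\tilde{M}_k$ factor in the bound. For a fixed arm $k'$, the event $\{\exists \ell \le i : \theta_{k'}(\tau_\ell^k) > \mu_{k'} + \delta,\ N_{k'}(\tau_\ell^k) > C\ln(i)\}$ concerns a posterior sample drawn from $\text{Beta}(S_{k'}+1, N_{k'}-S_{k'}+1)$ when the number of observations $N_{k'}$ already exceeds $C\ln(i)$. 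The key point is that once an arm is saturated (many observations), its posterior concentrates tightly around the empirical mean, which in turn concentrates around $\mu_{k'}$, so a deviation of size $\delta$ is exponentially unlikely.

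\textbf{Key steps in order.} First I would peel off the existential quantifier over $\ell \le i$ and over the arm $k'$ by a union bound, reducing to controlling, for each fixed $k'$, the probability that at \emph{some} round where $N_{k'} > C\ln(i)$ the Thompson sample overshoots. Second, I would pass from the posterior sample to a statement about $N_{k'}$ and $\hat\mu_{k'}$ using the Beta--Binomial trick (Lemma~\ref{Beta_Binomial trick}): the event $\theta_{k'}(\tau_\ell^k) > \mu_{k'}+\delta$ can be rephrased via the Binomial CDF, which lets me split into (i) the empirical mean $\hat\mu_{k'}$ being far from $\mu_{k'}$, controlled by Hoeffding (Lemma~\ref{hoeffding}), and (ii) the posterior sample deviating from the empirical mean despite a large sample, again a Binomial/Hoeffding-type tail. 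Third, I would sum the resulting per-round tails over the number of observations $u$ ranging from $\lceil C\ln(i)\rceil$ upward; with $C = 6/\delta^2$, each Hoeffding factor is roughly $\exp(-2\delta^2 u)$ evaluated at $u > C\ln i$, giving a term of order $\exp(-2\delta^2 \cdot (6/\delta^2)\ln i) = i^{-12}$ per observation count, and the geometric-style sum over $u$ produces the $\tfrac{1}{1-\exp(-\delta^2/2)}$ denominator while the leading $i^{-2}$ comes from the chosen constant $C$ combined with the union over the $\le i$ values of $\ell$. Finally, multiplying by the $\tilde{M}_k$ arms yields the stated bound $\frac{2\tilde{M}_k}{i^2(1-\exp(-\delta^2/2))}$.

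\textbf{Main obstacle.} The delicate part is handling the \emph{random} number of observations at the stopping times $\tau_\ell^k$: $N_{k'}(\tau_\ell^k)$ is not deterministic, so I cannot directly treat $\theta_{k'}$ as a Beta sample with fixed parameters. The standard remedy, which I would follow from \cite{ALT12}, is to introduce the sequence of posterior samples indexed by the number of observations $u$ of arm $k'$ (i.e., condition on $N_{k'} = u$ and use that the $u$-th empirical mean is an average of $u$ i.i.d.\ Bernoullis), and then union-bound over $u$ from the saturation threshold $\lceil C \ln i \rceil$ up to $i$. This decouples the randomness of the stopping time from the concentration estimate at the cost of an extra sum over $u$, which the choice $C = 6/\delta^2$ makes summable with the correct $i^{-2}$ decay. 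Calibrating the constants so that the union over $\ell \le i$ and the sum over $u$ together give exactly $i^{-2}$ (rather than a larger power) is the only quantitative subtlety, and it is precisely what dictates the value of $C$.
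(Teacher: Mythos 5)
Your proposal follows essentially the same route as the paper's proof: a union bound over the $\tilde{M}_k$ suboptimal arms and over $\ell \leq i$, a further union over the possible values $u \geq C\ln(i)$ of $N_{k'}(\tau_\ell^k)$ to decouple the stopping-time randomness (exactly the remedy the paper uses), and a split of the deviation event into ``empirical mean far from $\mu_{k'}$'' versus ``posterior sample far from the empirical mean,'' handled respectively by Hoeffding (Lemma~\ref{hoeffding}) and by the Beta--Binomial trick (Lemma~\ref{Beta_Binomial trick}) followed by a Binomial Hoeffding tail. The only inaccuracy is in your constant tracking: since $\delta$ is split into two half-deviations $\delta/2$, each Hoeffding factor is $\exp(-\delta^2 u/2)$, so at $u = C\ln(i)$ with $C = 6/\delta^2$ each term is of order $i^{-3}$ (not $\exp(-2\delta^2 u)$, i.e.\ not $i^{-12}$), the geometric sum over $u$ yields the $1/(1-\exp(-\delta^2/2))$ factor, and the union over $\ell \leq i$ then gives exactly the stated $i^{-2}$ --- with this correction your calibration of $C$ is precisely the paper's.
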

Lemma \ref{lemma3} shows that the Thompson samples of an arm belonging to $\cB_{\cN(k)}$ are unlikely to fall below $\mu_{\Tilde{k}} + \delta$ during a long interval in which the posterior of this arm doesn't evolve.

\begin{lemma} \label{lemma3}
Let $\tilde{\cI}$ be a random interval such that $\forall \ell \in \Tilde{\cI}, N_{\cB_{\cN(k)}}(\tau_\ell^k) = j$ and $|\Tilde{\cI}| \geq x$ for some deterministic constant $x$. There exists $\lambda_0 = \lambda_0(\mu_{k_2}, \mu_{\Tilde{k}},\delta) > 1$ such that for $\lambda \in ]1,\lambda_0[$,
$$\bP \left(\forall \ell \in \Tilde{\cI}, \forall k_2 \in \cB_{\cN(k)}, \theta_{k_2}(\tau_\ell^k) \leq \mu_{\Tilde{k}} + \delta \right) \leq j \tilde{B} (\alpha_{\mu_{\tilde{k}},\delta})^x +  C_{\lambda,\mu_{k_2},\mu_{\tilde{k}}}\frac{\exp(-jd_{\lambda,\mu_{k_2},\mu_{\tilde{k}}}/\tilde{B})}{x^\lambda},$$
where $C_{\lambda,\mu_{k_2},\mu_{\Tilde{k}}},d_{\lambda,\mu_{k_2},\mu_{\Tilde{k}}} > 0 $, and $\alpha_{\mu_{\Tilde{k}},\delta} = \left(\frac{1}{2}\right)^{1-\mu_{\Tilde{k}}-\delta}$.
\end{lemma}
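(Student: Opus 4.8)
The plan is to exploit the defining property of $\tilde{\cI}$. Since $N_{\cB_{\cN(k)}}(\tau_\ell^k)=j$ for every $\ell\in\tilde{\cI}$ and this count is non-decreasing, no arm of $\cB_{\cN(k)}$ is pulled anywhere inside $\tilde{\cI}$, so the posteriors of all best neighbors are \emph{frozen} across the interval. Conditionally on the history $\cF$ up to the first instant of $\tilde{\cI}$, each $k_2\in\cB_{\cN(k)}$ has a fixed posterior $\text{Beta}(S_{k_2}+1,N_{k_2}-S_{k_2}+1)$, and the samples $\theta_{k_2}(\tau_\ell^k)$, $\ell\in\tilde{\cI}$, are therefore i.i.d. across $\ell$. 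Writing $y:=\mu_{\tilde{k}}+\delta$ and using $\sum_{k_2}N_{k_2}=j$, pigeonhole supplies one best arm $k_2^\star$ with $N_{k_2^\star}\geq j/\tilde{B}$; a union bound over the at most $\tilde{B}$ candidates for this heaviest arm reduces the event to a single-arm statement and accounts for the $\tilde{B}$ prefactor.

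First I would establish the core single-arm estimate. Conditionally on its frozen posterior, $\bP(\forall \ell\in\tilde{\cI},\ \theta_{k_2^\star}(\tau_\ell^k)\leq y\mid\cF)\leq(1-p)^{|\tilde{\cI}|}\leq\exp(-x\,p)$, where $p:=\bP(\theta_{k_2^\star}>y\mid N_{k_2^\star},S_{k_2^\star})$ and $|\tilde{\cI}|\geq x$. I then translate $p$ into a Binomial probability through the Beta--Binomial trick (Lemma~\ref{Beta_Binomial trick}), $p=\bP(\text{Bin}(N_{k_2^\star}+1,y)\leq S_{k_2^\star})$, which makes $p$ manifestly nondecreasing in the number of successes $S_{k_2^\star}$ and, in the favorable regime, supplies a concrete lower bound on the chance that a single frozen-posterior sample clears $y$ --- the source of the geometric constant $\alpha_{\mu_{\tilde{k}},\delta}=(1/2)^{1-\mu_{\tilde{k}}-\delta}$.

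Next I would take the expectation over $S_{k_2^\star}\sim\text{Bin}(N_{k_2^\star},\mu_{k_2})$ and split it at a threshold $z$ with $y<z<\mu_{k_2}$. On the favorable region $\{\hat{\mu}_{k_2^\star}\geq z\}$ the posterior concentrates above $y$, so $p$ is bounded away from $0$ and $\exp(-x\,p)\leq\alpha_{\mu_{\tilde{k}},\delta}^{x}$, producing the first term $j\tilde{B}\,\alpha_{\mu_{\tilde{k}},\delta}^{x}$ after union-bounding over the heaviest-arm candidate and the admissible success counts. On the deviation region $\{\hat{\mu}_{k_2^\star}<z\}$ a Chernoff/Hoeffding bound (Lemma~\ref{hoeffding}) on the $N_{k_2^\star}\geq j/\tilde{B}$ observations yields a factor $\exp(-(j/\tilde{B})\,d)$ with $d=d_{\lambda,\mu_{k_2},\mu_{\tilde{k}}}$ of order $\kl(z,\mu_{k_2})$. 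The extra polynomial factor $1/x^{\lambda}$ and the window $\lambda\in(1,\lambda_0)$ come from treating the \emph{intermediate} regime more carefully: instead of bounding $\exp(-x\,p)$ by $1$ there, I integrate it against the Binomial density of $S_{k_2^\star}$ as $S_{k_2^\star}$ crosses the threshold, trading the fast growth of $p$ near the threshold against the density; balancing these two effects is exactly the computation yielding $C_{\lambda,\mu_{k_2},\mu_{\tilde{k}}}\,\exp(-j\,d_{\lambda,\mu_{k_2},\mu_{\tilde{k}}}/\tilde{B})/x^{\lambda}$, with $\lambda_0$ the largest admissible exponent dictated by how fast $p$ increases.

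I expect this last step --- the intermediate-regime integration producing $\exp(-j\,d/\tilde{B})/x^{\lambda}$ together with the admissibility constraint $\lambda<\lambda_0$ --- to be the main obstacle, as it is the analytic heart of the argument (mirroring the proof of Proposition~1 in \cite{ALT12}); by contrast the freezing observation, the pigeonhole/union reductions, and the favorable-regime geometric bound are comparatively routine. Care is also needed to guarantee that the constants $C_{\lambda,\mu_{k_2},\mu_{\tilde{k}}}$, $d_{\lambda,\mu_{k_2},\mu_{\tilde{k}}}$ and $\lambda_0$ depend only on $(\mu_{k_2},\mu_{\tilde{k}},\delta)$ and not on $x$, $j$, or the realization of $\tilde{\cI}$.
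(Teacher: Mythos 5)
Your proposal is correct and follows essentially the same route as the paper: pigeonhole to extract a best neighbor pulled at least $j/\tilde{B}$ times and frozen on $\tilde{\cI}$, a union bound over the at most $\tilde{B}$ candidates (and over count values, giving the factor $j$), the observation that the frozen posterior makes the samples conditionally i.i.d.\ so the event's conditional probability is at most $\bigl(F^{\text{Beta}}\bigr)^{x}$, the Beta--Binomial trick, and finally an expectation over $S \sim \text{Bin}(\cdot,\mu_{k_2})$. The only difference is that the paper imports that final expectation bound as a black-box lemma extracted from the proof of Lemma~3 in \cite{ALT12}, whereas you sketch its re-derivation (favorable/deviation/intermediate regimes), which is precisely how that cited lemma is proved.
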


\paragraph{Proof of the Upper bound \eqref{step_2_bound}}
On the event $\cE_{i,j} \cap F_{i,j,\tilde{M}_k}$, only saturated arms are drawn during the interval $\cI_{i,j,\tilde{M}_k+1}$, so that one has the following decomposition:
\begin{align*}
&\mathbb{P}[\cE_{i,j} \cap F_{i,j,\tilde{M}_k}\cap \{\tau_i^k \leq T\}] \\
&\leq \mathbb{P}[\{\exists \ell \in \cI_{i,j,\tilde{M}_k+1}, \exists k' \notin \cB_{\cN(k)}, \theta_{k'}(\tau_\ell^k) > \mu_{k'} + \delta \} \cap \{ N_{k'} (\tau_\ell^k) > C \ln(i) \}\cap \cE_{i,j} \cap \{\tau_i^k \leq T\}] \\
& \hspace{2mm}+  \mathbb{P}[\{\forall \ell \in \cI_{i,j,\tilde{M}_k+1}, \forall k' \notin \cB_{\cN(k)}, \theta_{k'}(\tau_\ell^k) \leq \mu_{k'} + \delta\} \cap \cE_{i,j} \cap F_{i,j,|\cN^+(k)|-1} ]\\
&\leq \mathbb{P}\left(\exists \ell \leq i, \exists k' \notin \cB_{\cN(k)}, \theta_{k'}(\tau_\ell^k) > \mu_{k'} + \delta, N_{k'} (\tau_\ell^k) > C \ln(i) , \tau_{i}^k \leq T\right) \\
&\hspace{2mm}+  \mathbb{P}({\forall \ell \in \cI_{i,j,\tilde{M}_k+1}, \forall k_2 \in \cB_{\cN(k)}, \theta_{k_2}(\tau_\ell^k) \leq \mu_{\Tilde{k}} + \delta} , \cE_{i,j})
\end{align*}

Using Lemma~\ref{lemma4}, we can bound the first term in this sum by
\begin{align*}\frac{2\tilde{M}_k}{i^2(1-\exp(- \delta^2/2))}.
\end{align*}

On the event $\cE_{i,j}$, $N_{\cB_{\cN(k)}}(\tau_\ell^k) = j$ for all $\ell \in \cI_{i,j,\tilde{M}_k+1}$. Lemma~\ref{lemma3} with $\Tilde{\cI}= \cI_{i,j,\tilde{M}_k+1}$ and $ x=\tilde{H}_{i,b,k,\gamma}$ yields the following upper bound for the second term

\[
 j \tilde{B} (\alpha_{\mu_{\tilde{k}},\delta})^{\tilde{H}_{i,b,k,\gamma}} +  C_{\lambda,\mu_{k_2},\mu_{\tilde{k}}}\frac{\exp(-jd_{\lambda,\mu_{k_2},\mu_{\tilde{k}}}/\tilde{B})}{\tilde{H}_{i,b,k,\gamma}^\lambda} := g_1(\bm\mu, j, b, i, k,\gamma).
\]
Summing $  g_1(\bm\mu, j, b, i, k,\gamma)$ over $j\leq\lfloor \tilde{B}i^b \rfloor$ and expliciting $\tilde{H}_{i,b,k,\gamma}$ gives
\begin{align*}
\sum_{j\leq\lfloor \tilde{B}i^b \rfloor}  g_1(\bm\mu, j, b, i, k, \gamma)
= \tilde{B} \frac{\lfloor \tilde{B}i^b\rfloor(\lfloor \tilde{B}i^b\rfloor+1)}{2} (\alpha_{\mu_{\tilde{k}},\delta})^{\Bigl\lfloor \left(1-\frac{1}{\gamma}\right) \frac{i^{1-b}/\tilde{B} - 2}{\tilde{M}_k + 1} - 2\Bigr\rfloor} +  \frac{C'_{\lambda,\mu_{k_2},\mu_{\tilde{k}}} }{ \Bigl\lfloor \left(1-\frac{1}{\gamma}\right) \frac{i^{1-b}/\tilde{B} - 2}{\tilde{M}_k + 1} -2 \Bigr\rfloor^\lambda}.
\end{align*}
The first term of the sum is $o\left(\frac{1}{i^2}\right)$, and by choosing $b < 1 - \frac{1}{\lambda}$ for the second term, we obtain that $\sum_{i\leq\infty} \sum_{j\leq\lfloor i^b \rfloor} g_1(\bm\mu, j, b, i, k, \gamma)$ is finite when $\gamma>1$.

\paragraph{Proof of the Upper Bound \eqref{step_3_bound}}
Similarly to \cite{ALT12}, we prove by induction that for all $ 2 \leq m \leq \tilde{M}_k+1$, if $i$ is larger than some deterministic constant $N_{\bm\mu, b}$,
$$\mathbb{P} [\cE_{i,j} \cap F_{i,j, m -1}^c \cap \{\tau_i^k \leq T\}] \leq (m-2)\left( \frac{2\tilde{M}_k}{i^2(1-\exp(- \delta^2/2))} + g_2(\bm\mu, j, b, i,k,\gamma) \right),$$
where $N_{\bm\mu,b}$ and $g_2(\bm\mu,j,b,i,k,\gamma)$ are made precise below.

\medskip

\noindent\underline{Base case of the induction}: On the event $\cE_{i,j}$, only suboptimal arms are played during the interval $\cI_{i,j,1}$, of length larger than $\tilde{H}_{i,b,k,\gamma}$. Hence at least one suboptimal arm must be played more than $\lceil  \frac{\tilde{H}_{i,b,k,\gamma}}{\tilde{M}_k} \rceil$ times. Besides, there exists some deterministic constant $N_{\bm\mu, b}$ such that for $i > N_{\bm\mu, b}$, $\lceil  \frac{\tilde{H}_{i,b,k,\gamma}}{\tilde{M}_k} \rceil \geq C \ln(i)$.

Therefore, when $i \geq N_{\bm\mu, b}$, at least one suboptimal arm is saturated by the end of $\cI_{i,j,1}$, so that for $i \geq N_{\bm\mu, b}$, $\mathbb{P} (\cE_{i,j} \cap F_{i,j, 1}^c \cap \{\tau_i^k \leq T\}) = 0$. Hence, the inequality holds for $m=2$.

\medskip

\noindent\underline{Induction}: Let us assume the following, for some $m  \in \{2,\dots,\tilde{M}_k\}$:
$$\mathbb{P} (\cE_{i,j} \cap F_{i,j,m-1}^c \cap \{\tau_i^k \leq T\}) \leq (m-2) \left(\frac{2\tilde{M}_k}{i^2(1-\exp(- \delta^2/2))} + g_2(\bm\mu, j, b, i,k.\gamma) \right)\;. $$
Exploiting this inductive hypothesis, one obtains
\begin{align*}
    &\mathbb{P} (\cE_{i,j} \cap F_{i,j, m}^c\cap \{\tau_i^k \leq T\}) \\ &\hspace{0.5cm}\leq \mathbb{P} (\cE_{i,j} \cap F_{i,j,m-1}^c\cap \{\tau_i^k \leq T\}) + \bP(\cE_{i,j} \cap F_{i,j, m}^c \cap F_{i,j, m-1}\cap \{\tau_i^k \leq T\}) \\
    &\hspace{0.5cm}\leq (m-2) \left(\frac{2\tilde{M}_k}{i^2(1-\exp(- \delta^2/2))} + g_2(\bm\mu, j, b, i,k, \gamma) \right) + \bP(\cE_{i,j} \cap F_{i,j, m}^c \cap F_{j, m-1} \cap \{\tau_i^k \leq T\})\;.
\end{align*}

Let us prove that the second term of the sum is bounded by $ \frac{2\tilde{M}_k}{i^2(1-\exp(- \delta^2/2))} + g_2(\bm\mu, j, b, i,k)$.

On the event $(\cE_{i,j} \cap F_{i,j,m}^c \cap F_{i,j,m-1})$, there are exactly $m-1$ saturated arms at the beginning of interval $\cI_{i,j,m}$ and no new arm is saturated during this interval, so that $\cS_{i,j,m-1} = \cS_{i,j,m}$. As a result, there cannot be more than $\tilde{M}_kC\ln(i)$ interruptions during this interval, so that
\begin{align}
&\mathbb{P}(\cE_{i,j} \cap F_{i,j,m}^c \cap F_{i,j,m-1} \cap \{\tau_i^k \leq T\}) \nonumber \\
&\leq \bP  (\cE_{i,j} \cap F_{i,j,m-1} \cap \{n_{i,j,m} \leq \tilde{M}_kC\ln(i) \}\cap \{\tau_i^k \leq T\}) \nonumber \\
&\leq \bP (\{ \exists \ell \in \cI_{i,j,m}, \exists k' \in \cS_{i,j,m-1} \backslash \cB_{\cN(k)}, \theta_{k'}(\tau_\ell^k) > \mu_{k'} + \delta \} \cap \cE_{i,j} \cap \{\tau_i^k \leq T\})\label{induction_termA} \\
&+ \bP (\{ \forall \ell \in \cI_{i,j,m}, \forall k' \in \cS_{i,j,m-1}\backslash \cB_{\cN(k)}, \theta_{k'}(\tau_\ell^k) \leq \mu_{k'} + \delta \} \cap \cE_{i,j} \cap F_{i,j,m-1} \cap \{n_{i,j,m} \leq \tilde{M}_k C \ln(i) \}) \label{induction_termB}
\end{align}

Lemma \ref{lemma4} allows us to bound the term \eqref{induction_termA}:
$$ \eqref{induction_termA} \leq \frac{2\tilde{M}_k}{i^2(1-\exp(- \delta^2/2))}.$$

To deal with \eqref{induction_termB}, we introduce the random intervals  \[\cJ_h = \{\ell \in \cI_{i,j,m}, \text{between the }h\text{-th and }(h+1)\text{-th interrruptions}\}.\]
On the event in the probability of \eqref{induction_termB}, there exists an interval $\cJ_h$ of length larger than $\lceil \frac{ \tilde{H}_{i,b,k,\gamma}}{\tilde{M}_k C\ln(i)} \rceil$ such that there is no interruption at times $\tau_\ell^k$, for $\ell \in \cJ_h$. This means that, at these time steps, all Thompson samples are smaller than that of the greatest sample among the saturated arms (which are themselves smaller than $\mu_{\Tilde{k}} + \delta$). In particular, in this interval, $\forall k_2 \in \cB_{\cN(k)}, \theta_{k_2}(\tau_\ell^k) \leq \mu_{\Tilde{k}} + \delta$, and we get
\begin{align*}
\eqref{induction_termB}
&\leq \bP \left( \{ \exists h \in {0,...,n_{i,j,m}-1}, |\cJ_h| \geq \Bigl \lceil\frac{ \tilde{H}_{i,b,k,\gamma}}{\tilde{M}_k C\ln(i)} \Bigr\rceil \} \right.\\
& \hspace{2cm}\left.\cap \{\forall \ell \in \cJ_h, \forall k' \in \cS_{i,j,m-1} \backslash \cB_\cN(k), \theta_{k'}(\tau_\ell^k) \leq \mu_{\Tilde{k}} + \delta \}  \cap \cE_{i,j} \cap F_{i,j,m-1}  \right)\\
&\leq \sum_{h=0}^{\tilde{M}_kC\ln(i)} \bP \left( \{ |\cJ_h| \geq  \Bigl\lceil \frac{ \tilde{H}_{i,b,k,\gamma}}{\tilde{M}_k C\ln(i)}\Bigr\rceil \} \cap \{ \forall \ell \in \cJ_h, \forall k_2 \in \cB_{\cN(k)}, \theta_{k_2}(\tau_\ell^k) \leq \mu_{\Tilde{k}} + \delta \} \cap \cE_{i,j}\right).
\end{align*}

Applying Lemma \ref{lemma3} with $\tilde{\cI} = \cJ_h$, we get
\begin{align*}
    \eqref{induction_termB}
    &\leq  \tilde{M}_kC\ln(i) \left[j \tilde{B} (\alpha_{\mu_{\tilde{k}},\delta})^{\Bigl\lceil \frac{(1-1/\gamma) (i^{1-b}/\tilde{B} -2) -2(\tilde{M}_k+1)}{\tilde{M}_k(\tilde{M}_k+1) C\ln(i)}\Bigr\rceil} +  C_{\lambda,\mu_{k_2},\mu_{\tilde{k}}}\frac{\exp(-jd_{\lambda,\mu_{k_2},\mu_{\tilde{k}}}/\tilde{B})}{\Bigl\lceil \frac{(1-1/\gamma) (i^{1-b}/\tilde{B} -2)-2(\tilde{M}_k+1)}{\tilde{M}_k(\tilde{M}_k+1) C\ln(i)} \Bigr\rceil^\lambda} \right]\\
    &:= g_2(\bm\mu, j, b, i,k,\gamma).
\end{align*}
This proves that
\[\mathbb{P} (\cE_{i,j} \cap F_{i,j,m}^c \cap \{\tau_i^k \leq T\}) \leq (m-1) \left(\frac{2\tilde{M}_k}{i^2(1-\exp(- \delta^2/2))} + g_2(\bm\mu, j, b, i,k,\gamma) \right)
\]
and the induction is verified.

As for $g_1(\bm\mu, j, b, i, k,\gamma)$, we observe that when $\gamma>1$, $\sum_{i\leq\infty} \sum_{j\leq\lfloor \tilde{B}i^b \rfloor} g_2(\bm\mu, j, b, i,k,\gamma)$ is finite by choosing $b < 1 - \frac{1}{\lambda}$.

\paragraph{Proof of Lemma \ref{lemma4}}

It holds that
\begin{align*}
&\mathbb{P} \left(\exists \ell\leq i, \exists k' \notin \cB_{\cN(k)}, \theta_{k'}(\tau_\ell^k) > \mu_{k'} + \delta, N_{k'}( \tau_\ell^k) > C \ln(i), \tau_i^k \leq T \right) \\
&\leq \sum_{\ell=1}^i \sum_{k' \in \cN^+(k) \backslash \cB_{\cN(k)}} \mathbb{P} \left(\theta_{k'}(\tau_\ell^k) > \mu_{k'} + \delta, N_{k'}( \tau_\ell^k) > C \ln(i), \tau_\ell^k \leq T\right)
\end{align*}
Let $\ell \leq i$, $k' \in \cN^+(k)\setminus \cB_{\cN(k)}$.
\begin{align}
&\mathbb{P} \left(\theta_{k'}(\tau_\ell^k) > \mu_{k'} + \delta, N_{k'}( \tau_\ell^k) > C \ln(i), \tau_\ell^k \leq T \right) \\
&\leq \bP \left(\hat{\mu}_{k'}(\tau_\ell^k) > \mu_{k'} + \delta/2, N_{k'} (\tau_\ell^k) > C \ln(i), \tau_\ell^k \leq T \right) \label{TA} \\
&+ \bP \left(\hat{\mu}_{k'}(\tau_\ell^k) \leq \mu_{k'} + \delta/2, \theta_{k'}(\tau_\ell^k) > \mu_{k'} + \delta, N_{k'} (\tau_\ell^k) > C \ln(i), \tau_\ell^k \leq T \right)  \label{TB}
\end{align}

Using a union bound over the values of $N_{k'} (\tau_\ell^k) \geq C\ln(i)$ together with Hoeffding's inequality (Lemma~\ref{hoeffding}) yields
\begin{align*}
\eqref{TA}
&\leq \sum_{u=C\ln(i)}^T \bP(\hat{\mu}_{k',u} > \mu_{k'} + \delta/2) \leq \sum_{u=C\ln(i)}^\infty \exp\left(-\frac{\delta^2 u}{2}\right) = \frac{\exp(- C\ln(i)\delta^2/2)}{1-\exp(- \delta^2/2)},
\end{align*}
where we denote by $\hat{\mu}_{k',u}$ the estimated mean of the $k'$-th arm at the $u$-th draw.

We upper bound \eqref{TB} by
\begin{align*}
&\sum_{u=C\ln(i)}^T \bP\left(\hat{\mu}_{k'}(\tau_\ell^k)\leq \mu_{k'} + \delta/2, \theta_{k'}(\tau_\ell^k) \geq \mu_{k'} + \delta, N_{k'}(\tau_\ell^k) = u , \tau_\ell^k \leq T\right)\\
&\leq\sum_{u=C\ln(i)}^T \bP\left(\mu_{k'} \geq \hat{\mu}_{k',u} - \delta/2, \theta_{k'}(\tau_\ell^k) \geq \mu_{k'} + \delta, N_{k'}(\tau_\ell^k) = u , \tau_\ell^k \leq T\right)\\
&\leq \sum_{u=C\ln(i)}^T \bP\left(\theta_{k'}(\tau_\ell^k) \geq \hat{\mu}_{k',u} + \delta/2 , N_{k'}(\tau_\ell^k) = u, \tau_\ell^k \leq T\right)\\
&\leq \bE\left[ \sum_{u=C\ln(i)}^T  \left(1 - F^{\text{Beta}}_{u\hat{\mu}_{k',u}+1, u-u\hat{\mu}_{k',u}+1}(\hat{\mu}_{k',u}+\delta/2) \right) \right]\\
&= \bE\left[ \sum_{u=C\ln(i)}^T  F^{\text{Bin}}_{u+1, \hat{\mu}_{k',u} + \delta/2} (u\hat{\mu}_{k',u}) \right] \\
&\leq \bE\left[ \sum_{u=C\ln(i)}^T F^{\text{Bin}}_{u, \hat{\mu}_{k',u} + \delta/2} (u\hat{\mu}_{k',u}) \right] \\
&\leq \bE\left[ \sum_{u=C\ln(i)}^\infty \exp(-u\delta^2/2) \right] \\
&= \frac{\exp(-C\ln(i) \delta^2/2)}{1 - \exp(-\delta^2/2)},
\end{align*}
where the first equality comes from the Beta-Binomial trick (Lemma \ref{Beta_Binomial trick}), and the last inequality comes from Hoeffding's inequality.

Combining \eqref{TA} and \eqref{TB}, and recalling that $C =6/\delta^2$, we get
\begin{align*}
&\mathbb{P} \left(\exists \ell\leq i, \exists k' \notin \cB_{\cN(k)}, \theta_{k'}(\tau_\ell^k) > \mu_{k'} + \delta, N_{k'}( \tau_\ell^k) > C \ln(i), \tau_i^k \leq T \right) \\
&\leq \sum_{\ell=1}^i \sum_{k' \in \cN^+(k)  \setminus\cB_{\cN(k)}} 2\frac{\exp(- C\ln(i)\delta^2/2)}{1-\exp(- \delta^2/2)}\\
&\leq \frac{2\tilde{M}_k}{i^{C\delta^2/2-1}(1-\exp(- \delta^2/2))}
 =  \frac{2\tilde{M}_k}{i^2(1-\exp(- \delta^2/2))}.
\end{align*}

\paragraph{Proof of Lemma \ref{lemma3}}
The interval $\Tilde{\cI}$ is such that for all $\ell \in \Tilde{\cI}$, $N_{\cB_{\cN(k)}}(\tau_\ell^k) = j$. This implies that there exists $k_2 \in \cB_{\cN(k)}$ which has been drawn at least $\frac{j}{\Tilde{B}}$ and is not drawn during that interval. Hence,

\begin{align}
&\bP \left( \forall \ell \in \Tilde{\cI}, N_{\cB_{\cN(k)}}(\tau_\ell^k) = j, \forall k_2 \in \cB_{\cN(k)}, \theta_{k_2}(\tau_\ell^k) \leq \mu_{\Tilde{k}} + \delta \right) \nonumber \\
&\leq \bP \left( \forall \ell \in \Tilde{\cI}, \exists k_2 \in \cB_{\cN(k)}, \frac{j}{\tilde{B}} \leq N_{k_2}(\tau_\ell^k) \leq j, \theta_{k_2}(\tau_\ell^k) \leq \mu_{\Tilde{k}} + \delta \right) \nonumber \\
&\leq \sum_{k_2 \in \cB_{\cN(k)}} \sum_{j_{k_2}=\frac{j}{\tilde{B}}}^j \bP \left( \forall \ell \in \Tilde{\cI}, N_{k_2}(\tau_\ell^k) = j_{k_2}, \theta_{k_2}(\tau_\ell^k) \leq \mu_{\Tilde{k}} + \delta \right) \label{double_sum}
\end{align}

If $ N_{k_2}(\tau_\ell^k) = j_{k_2}$ for all $\ell \in \Tilde{\cI}$, conditioned on $S_{k_2,j_{k_2}}$ (sum of first $j_{k_2}$ observations from arm $k_2$), the Thompson samples of arm $k_2$ drawn during this interval are an i.i.d. sequence with distribution Beta$(S_{k_2,j_{k_2}} + 1, j_{k_2} -S_{k_2,j_{k_2}} + 1)$. Therefore,
\begin{align*}
\bP \left( \forall \ell \in \Tilde{\cI}, N_{k_2}(\tau_\ell^k) = j_{k_2}, \theta_{k_2}(\tau_\ell^k) \leq \mu_{\Tilde{k}} + \delta  | S_{k_2,j_{k_2}} \right)
&= \left(F^{\text{Beta}}_{S_{k_2,j_{k_2}} + 1, j_{k_2} -S_{k_2,j_{k_2}} + 1} (\mu_{\Tilde{k}} + \delta) \right)^{|\tilde{\cI}|} \\
&\leq \left(F^{\text{Beta}}_{S_{k_2,j_{k_2}} + 1, j_{k_2} -S_{k_2,j_{k_2}} + 1} (\mu_{\Tilde{k}} + \delta) \right)^{x}\\
&= \left( 1-F^{\text{Bin}}_{j_{k_2}+1,\mu_{\Tilde{k}} + \delta}(S_{k_2,j_{k_2}}) \right)^{x}\\
\end{align*}
where the inequality holds because $|\tilde{\cI}| \geq x$, and the last equality is obtained by using the Beta-Binomial trick (Lemma~\ref{Beta_Binomial trick}).

It follows that
\begin{align*}
\bP \left( \forall \ell \in \Tilde{\cI}, N_{k_2}(\tau_\ell^k) = j_{k_2}, \theta_{k_2}(\tau_\ell^k) \leq \mu_{\Tilde{k}} + \delta \right)
&= \bE \left[ \bP \left( \forall \ell \in \Tilde{\cI}, N_{k_2}(\tau_\ell^k) = j_{k_2}, \theta_{k_2}(\tau_\ell^k) \leq \mu_{\Tilde{k}} + \delta  | S_{k_2,j_{k_2}} \right) \right]\\
&\leq \bE \left[ \left( 1-F^{\text{Bin}}_{j_{k_2}+1,\mu_{\Tilde{k}} + \delta}(S_{k_2,j_{k_2}}) \right)^{x} \right]
\end{align*}
where the expectation is taken with respect to $S_{k_2,j_{k_2}} \sim \text{Bin}(j_{k_2},\mu_{k_2})$.

An upper bound on this expectation is provided by the following lemma that can be extracted from the proof of Lemma 3 in \cite{ALT12}.
\begin{lemma} Let X be a random variable with Binomial distribution of parameter $(j,\mu_1)$. Let $\delta$ and $\mu_2$ be such that $0 < \mu_2 + \delta < \mu_1$. There exists $\lambda_0 = \lambda_0(\mu_1,\mu_2,\delta) >1$ such that for $\lambda \in (1,\lambda_0)$,
\[\bE\left[\left(1 - F^{\text{Bin}}_{j+1,\mu_2+\delta}(X)\right)^x\right]
\leq (\alpha_{\mu_{2},\delta})^{x} + C_{\lambda,\mu_1,\mu_2} \frac{\exp(-jd_{\lambda,\mu_1,\mu_2})}{x^\lambda}\]
where $C_{\lambda,\mu_1,\mu_2},d_{\lambda,\mu_1,\mu_2} > 0 $, and $\alpha_{\mu_{2},\delta} = \left(\frac{1}{2}\right)^{1-\mu_2-\delta}$
\end{lemma}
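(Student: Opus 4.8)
The plan is to reconstruct the argument underlying Lemma~3 of \cite{ALT12}. Set $p := \mu_2 + \delta$, so that $p < \mu_1$, and write $g(s) := 1 - F^{\text{Bin}}_{j+1,p}(s) = \bP(\text{Bin}(j+1,p) \geq s+1)$, a non-increasing function of $s$ valued in $[0,1]$. The quantity to control is then $\bE[g(X)^x] = \sum_{s=0}^{j} \bP(X=s)\,g(s)^x$ with $X\sim\text{Bin}(j,\mu_1)$. A convenient reformulation is to introduce $x$ i.i.d.\ copies $Z_1,\dots,Z_x\sim\text{Bin}(j+1,p)$, independent of $X$; then $g(X)^x = \bP(\min_{i\le x} Z_i > X \mid X)$, so that $\bE[g(X)^x] = \bP(\min_{i\le x} Z_i > X)$. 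Since $\mu_1 > p$, the empirical count $X$ typically sits well above the samples $Z_i$, so this event is rare, and the whole task is to quantify this rarity in a way that decays both in $x$ (for fixed $j$) and in $j$.

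First I would fix a rate $\rho \in (p,\mu_1)$ and split the sum at $s = \lceil \rho(j+1)\rceil$. On the \emph{typical} range $s \geq \rho(j+1)$, the threshold $s+1$ exceeds the mean $(j+1)p$ of the binomial by a fixed margin, so a Chernoff bound gives $g(s) \leq \exp(-(j+1)\kl(\tfrac{s+1}{j+1},p)) \le \exp(-(j+1)\kl(\rho,p))$, and in fact $g(s)\le \tfrac12$ once $s+1$ passes the median; raising to the power $x$ yields a geometric decay in $x$ that, after the optimisation of \cite{ALT12}, produces the $j$-uniform term $(\alpha_{\mu_2,\delta})^x$ with base $\alpha_{\mu_2,\delta}=(1/2)^{1-p}$. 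On the \emph{atypical} range $s < \rho(j+1)$, the factor $\bP(X=s)$ is itself exponentially small, since $\bP(X < \rho(j+1)) \leq \exp(-j\,\kl(\rho,\mu_1))$, which supplies the $\exp(-j\,d_{\lambda,\mu_1,\mu_2})$ factor. Here $g(s)^x$ need not be small, so to also extract the polynomial factor $x^{-\lambda}$ one keeps the $s$-dependence, bounds each $g(s)^x \le \exp(-x(j+1)\kl(\tfrac{s+1}{j+1},p))$, and sums the resulting geometric-type series over $s$, trading the per-term decay rate in $x$ against the number of terms.

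The delicate points, which are exactly where $\lambda$ and the threshold $\lambda_0$ enter, are twofold. The rate $\rho = \rho(\lambda,\mu_1,\mu_2)$ must be chosen so that the $j$-decay $\kl(\rho,\mu_1)$ and the attainable polynomial power $\lambda$ can be realised simultaneously; this balance is feasible only for $\lambda$ below some $\lambda_0 = \lambda_0(\mu_1,\mu_2,\delta) > 1$, explaining the restriction $\lambda\in(1,\lambda_0)$. The second, and in my view the main obstacle, is obtaining the constant $\alpha_{\mu_2,\delta}$ \emph{uniformly in $j$}: for large $j$ the typical region already forces $g\le\tfrac12$, but for small $j$ the sum is dominated by a few terms with $g(s)$ close to $1$. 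The crude estimate $\bE[g(X)^x]\le \bP(\min_{i\le x}Z_i\ge 1)=(1-(1-p)^{j+1})^x$ settles $j=0$ but is too weak for general $j$, since $1-(1-p)^{j+1}$ may exceed $(1/2)^{1-p}$; one must instead use the finer term-wise structure of the binomial tails to argue that the effective base never exceeds $(1/2)^{1-p}$. Collecting the two contributions and absorbing the finitely many small-$j$ cases into the constants $C_{\lambda,\mu_1,\mu_2}$ and $d_{\lambda,\mu_1,\mu_2}$ then yields the claimed bound.
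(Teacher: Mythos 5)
Your plan targets the right source: the paper itself does not prove this lemma but imports it (``can be extracted from the proof of Lemma 3 in \cite{ALT12}''), and your outer skeleton --- condition on $X=s$, split the sum over $s$, Chernoff bounds on each piece --- does match that argument. The fatal problem is the estimate you invoke on the atypical range. Writing $p=\mu_2+\delta$ and $g(s)=1-F^{\text{Bin}}_{j+1,p}(s)$ as you do, the bound $g(s)^x \leq \exp\bigl(-x(j+1)\kl\bigl(\tfrac{s+1}{j+1},p\bigr)\bigr)$ is a Chernoff bound only when the threshold $s+1$ lies \emph{above} the mean $(j+1)p$ of $\text{Bin}(j+1,p)$. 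On the sub-mean range $s+1\leq (j+1)p$ it is simply false: there $g(s)=1-\bP(\text{Bin}(j+1,p)\leq s)$ is close to $1$ (e.g.\ $g(0)=1-(1-p)^{j+1}\to 1$ as $j$ grows) while your right-hand side goes to $0$ geometrically in $j$. That sub-mean region is exactly where the lemma is delicate, and it cannot be rescued the way your second ``delicate point'' suggests: the claim that ``the effective base never exceeds $(1/2)^{1-p}$'' is false, since $\bE[g(X)^x]\geq \bP(X=0)\,g(0)^x=(1-\mu_1)^j\bigl(1-(1-p)^{j+1}\bigr)^x$, whose geometric rate in $x$ exceeds $(1/2)^{1-p}$ once $j$ is large. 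These terms are not meant to be absorbed by $(\alpha_{\mu_2,\delta})^x$ at all; they must be absorbed by $C e^{-jd}x^{-\lambda}$, and making that possible is the whole content of the lemma.

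The missing ingredient is a \emph{lower} bound on the binomial lower tail, e.g.\ the Stirling/method-of-types estimate $\bP(\text{Bin}(j+1,p)\leq s)\geq \tfrac{1}{j+2}\exp\bigl(-(j+1)\kl(\tfrac{s}{j+1},p)\bigr)$, which gives $g(s)^x \leq \exp\bigl(-\tfrac{x}{j+2}\,e^{-(j+1)\kl(s/(j+1),p)}\bigr)$. Optimizing over $x$ yields $x^{\lambda}g(s)^x \leq (\lambda/e)^{\lambda}(j+2)^{\lambda}e^{\lambda(j+1)\kl(\beta,p)}$ with $\beta=s/(j+1)$, to be paid for by $\bP(X=s)\leq e^{-j\kl(\beta,\mu_1)}$ (lower-tail Chernoff for $X$, valid since $\beta\leq p<\mu_1$). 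The sum over $s\leq (j+1)p$ is then at most $Ce^{-jd}$ provided $\kl(\beta,\mu_1)-\lambda\,\kl(\beta,p)\geq d>0$ uniformly on $[0,p]$, i.e.\ provided $\lambda<\lambda_0:=\inf_{\beta\in[0,p)}\kl(\beta,\mu_1)/\kl(\beta,p)$, which is indeed $>1$ because $p<\mu_1$: the ratio equals $\ln(1-\mu_1)/\ln(1-p)>1$ at $\beta=0$, exceeds $1$ pointwise, and blows up as $\beta\to p$. This per-$s$ trade-off between the two KL rates --- not the tuning of your split point $\rho\in(p,\mu_1)$, which only concerns terms that the median argument ($g(s)\leq \tfrac12\leq (1/2)^{1-p}$ for $s$ at or above the $\text{Bin}(j+1,p)$ median) already disposes of --- is the true origin of both the factor $e^{-jd_{\lambda,\mu_1,\mu_2}}x^{-\lambda}$ and the restriction $\lambda\in(1,\lambda_0)$. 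Without the tail lower bound and this trade-off, your argument does not produce the stated estimate.
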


Finally,
\begin{align*}
\eqref{double_sum} &
\leq \tilde{B} \sum_{j_{k_2}=\frac{j}{\tilde{B}}}^j \left[ (\alpha_{\mu_{\tilde{k}},\delta})^x + \tilde{C}_{\lambda,\mu_{k_2},\mu_{\tilde{k}}} \frac{\exp(-j_{k_2}d_{\lambda,\mu_{k_2},\mu_{\tilde{k}}})}{x^\lambda} \right] \\
&\leq  j \tilde{B} (\alpha_{\mu_{\tilde{k}},\delta})^x +  C_{\lambda,\mu_{k_2},\mu_{\tilde{k}}}\frac{\exp(-jd_{\lambda,\mu_{k_2},\mu_{\tilde{k}}}/\tilde{B})}{x^\lambda},
\end{align*}
which concludes the proof.

\end{document}